\documentclass{article} 
\usepackage{iclr2026_conference,times}


\usepackage{amsmath,amsfonts,bm}









\def\eqref#1{Eq.~(\ref{#1})}









\def\1{\bm{1}}










\DeclareMathAlphabet{\mathsfit}{\encodingdefault}{\sfdefault}{m}{sl}
\SetMathAlphabet{\mathsfit}{bold}{\encodingdefault}{\sfdefault}{bx}{n}













\usepackage{hyperref}
\usepackage{url}
\usepackage{booktabs}
\usepackage{multirow}
\usepackage{graphicx}
\usepackage{hyperref}
\usepackage{url}
\usepackage{graphicx}
\usepackage{amsmath,amssymb,amsthm}
\usepackage[ruled,vlined]{algorithm2e}
\usepackage{wrapfig}
\usepackage{amssymb}
\usepackage{enumitem}
\usepackage{comment}
\usepackage[utf8]{inputenc}
\theoremstyle{plain}
\newtheorem{theorem}{Theorem}[section]
\newtheorem{lemma}[theorem]{Lemma}

\newtheorem{remark}[theorem]{Remark}
\theoremstyle{definition}
\newtheorem{assumption}{Assumption}[section]

\usepackage{booktabs,tabularx,makecell,multirow,array}
\newcolumntype{Y}{>{\raggedright\arraybackslash}X}
\usepackage{booktabs}
\usepackage{etoolbox}
\usepackage{minitoc}

\title{GeoDM: Geometry-aware Distribution Matching for Dataset Distillation}



\author{Xuhui Li, Zhengquan Luo, Zihui Cui \& Zhiqiang Xu \thanks{Corresponding author.} \\
Department of Machine Learning\\
Mohamed bin Zayed University of Artificial Intelligence\\
Abu dhabi, UAE \\
\texttt{\{xuhui.li,zhengquan.luo,zihui.cui,zhiqiang.xu\}@mbzuai.ac.ae} \\
}

\iclrfinalcopy

%

\begin{document}

\maketitle


\begin{abstract} 
Dataset distillation aims to synthesize a compact subset of the original data, enabling models trained on it to achieve performance comparable to those trained on the original large dataset. Existing distribution-matching methods are confined to Euclidean spaces, making them only capture linear structures and overlook the intrinsic geometry of real data, e.g., curvature. 
However, high-dimensional data often lie on low-dimensional manifolds, suggesting that dataset distillation should have the distilled data manifold aligned with the original data manifold. 
In this work, we propose a geometry-aware distribution-matching framework, called \textbf{GeoDM}, which operates in the Cartesian product of Euclidean, hyperbolic, and spherical manifolds, with flat, hierarchical, and cyclical structures all captured by a unified representation. 
To adapt to the underlying data geometry, we introduce learnable curvature and weight parameters for three kinds of geometries. At the same time, we design an optimal transport loss to enhance the distribution fidelity. 
Our theoretical analysis shows that the geometry-aware distribution matching in a product space yields a smaller generalization error bound than the Euclidean counterparts. Extensive experiments conducted on standard benchmarks demonstrate that our algorithm outperforms state-of-the-art data distillation methods and remains effective across various distribution-matching strategies for the single geometries.

\end{abstract}

\section{Introduction}
\label{sec:1}

Dataset distillation (DD) \citep{wang2018dataset} aims to reduce the storage and computational burden of large-scale datasets by synthesizing a compact set of informative samples. In addition to optimization-based approaches such as gradient matching \citep{zhao2020dataset} and trajectory matching \citep{cazenavette2022dataset}, distribution matching (DM) \citep{zhao2023distribution} has gained particular traction due to its efficiency and conceptual simplicity, as it aligns the distributions of real and synthetic data rather than tracking optimization dynamics. A variety of DM methods have since been proposed: M3D \citep{zhang2024m3d} strengthens distribution-level consistency via higher-order moments, DSDM \citep{li2024dsdm} scales feature-level optimization, WDDD \citep{liu2023WDDD} leverages Wasserstein metrics, OPTICAL \citep{cui2025optical} incorporates optimal transport, and NCFM \citep{wang2025ncfm} captures discrepancies by modeling both frequency and magnitude components (see Appendix~\ref{sec:related} for more related work).

Despite these advances, nearly all DM-based methods conduct the matching in Euclidean latent embedding spaces \citep{li2025hyperbolicdatasetdistillation}. The underlying assumption for the embedding space, albeit effective on Euclidean data, neglects curvature and non-Euclidean structures inherent in real data, which inevitably limits the capacity of distilled samples to capture underlying geometries. The embeddings in Figure~\ref{fig:3D_PCA} exhibit distinct geometries: a tree-like hierarchy characterized by the hyperbolic space \citep{nickel2017poincare, ganea2018hyperbolic}, or a shell lying in the spherical space \citep{davidson2018hyperspherical}, whereas the Euclidean embedding cannot encode the hyperbolic or spherical geometries. This means that real data with hyperbolic and/or spherical geometries cannot be well captured by flat embeddings. This phenomenon resonates with the manifold hypothesis \citep{hinton2006reducing, tenenbaum2000global} that, high-dimensional data often lie on low-dimensional manifolds which faithfully describes data distribution. This motivates us to distill data via embedding into the product of low-dimensional manifolds where hierarchical, cyclical, or directional patterns present in the original dataset can be modeled. 

In this work, we propose a geometry-aware data distillation framework which performs distribution matching in a product Riemannian space that combines Euclidean, hyperbolic, and spherical manifolds to capture intrinsic geometries of real data. Specifically, we implement this product manifold with a Riemannian convolutional neural network \citep{masci2015geodesic} which can handle factor-wise feature maps and geometry-aware operations.
As we will see in Figure~\ref{fig:robust}, this geometry-aware modeling can achieve consistently strong performance across diverse datasets and distribution matching methods. 
To align with the low-dimensional manifold of each dataset, we introduce learnable curvature for non-Euclidean factors and learnable weights for different geometries. The former forces the product manifold to be aligned with real data manifold, while the latter captures Euclidean, hyperbolic, and spherical contributions. They shape the embedding space such that it increasingly conforms to the geometry of the data manifold during training.
Also, we design a geometry-aware optimal transport (OT) loss that measures how real and synthetic samples align across the three components of the product space.
The loss couples different geometries, preserves class-conditional mass, and prevents degenerate solutions where one component dominates.
In a word, distribution matching is cast into the alignment between product manifolds with learnable curvature and weights for approximate manifold learning while maintaining the efficiency of modern dataset distillation.

Furthermore, we introduce theoretical foundations for the geometry-aware dataset distillation. 
Under mild regularity assumptions (Assumption \ref{assump:regularity}), a high-dimensional sample $x$ can be decomposed into three parts: content about task-relevant information, noise describing stochastic fluctuations, and geometry of the underlying manifold. 
While most existing approaches implicitly merge the geometry part into the content part or simply ignore it, our analysis in Theorem \ref{thm:risk-decomposition} shows that the geometry part plays a crucial role in preserving distributional fidelity during distillation, and that in contrast, Euclidean embeddings tend to suppress or flatten this signal. 
Given this finding, Theorem \ref{thm:product} proves that distribution matching in a latent product space, that combines Euclidean, hyperbolic, and spherical components, yields a strictly tighter generalization error bound than in single Euclidean latent spaces. This implies that the way we do distribution matching here can not only approximate manifold learning itself but also guide synthetic data to respect structural constraints of real data distributions.

To summarize, we make the following contributions in this work:
\begin{itemize}
    \item We build on the manifold hypothesis to consider dataset distillation as a problem of distribution matching between product manifolds for embedding real and synthetic data, where each factor space comes with a unique curvature information, e.g., Euclidean, or hyperbolic, spherical. 
    \item We design \textbf{GeoDM}, a distribution-matching framework operating in a Cartesian product of Euclidean, hyperbolic, and spherical spaces, where learnable curvature and weights are introduced to capture the intrinsic geometry of real data, and an optimal transport loss is designed to enhance distributional alignment.
    \item We conduct the theoretical analysis which shows that the geometric structure of data plays a crucial role in dataset distillation, and performing distribution matching provides a tighter generalization error bound in Cartesian product spaces than in single Euclidean spaces.
    \item We conduct extensive experiments on standard benchmarks which show that our method consistently surpasses state-of-the-art dataset distillation algorithms and remains robust across different distribution-matching strategies for individual geometries.
\end{itemize}


\begin{figure}[h!]
    \centering
    \vspace{-0.5cm}
    \includegraphics[width=0.95\textwidth]{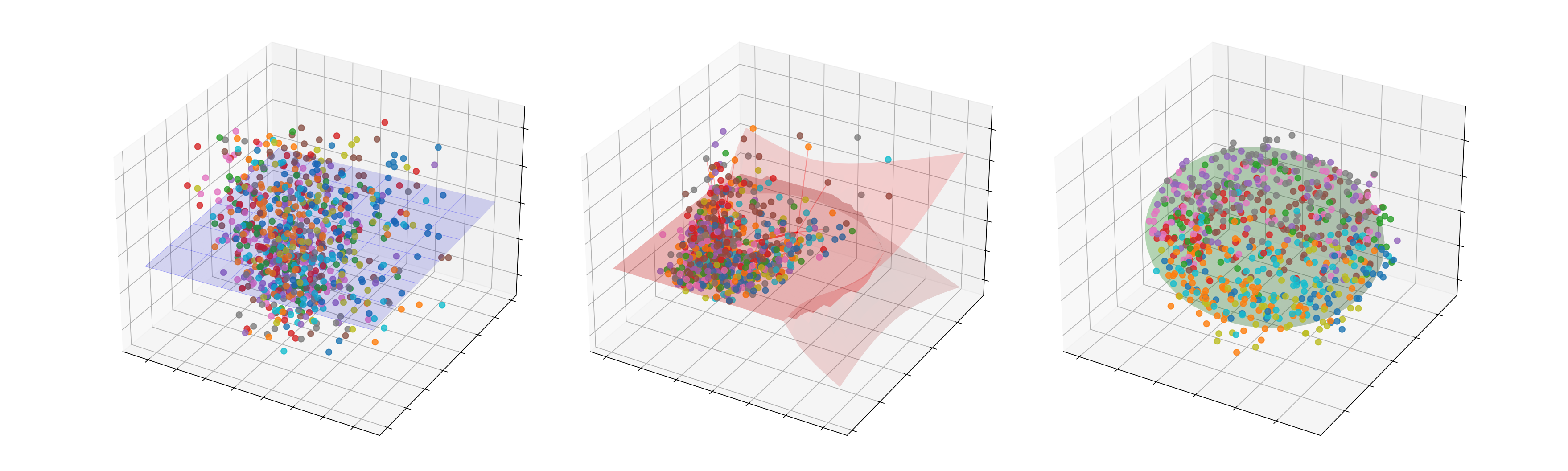} 
    \caption{3D embeddings of CIFAR10 data in Euclidean (left), hyperbolic (middle), and spherical (right) spaces with fitted manifolds: in hyperbolic space the points exhibit a hierarchical pattern and align well with the hyperbolic surface, while in spherical space they concentrate on a sphere, revealing inherent spherical geometry; the Euclidean space struggles to capture such geometric structure.}
    \label{fig:3D_PCA}
\end{figure}

\section{Preliminaries}

\subsection{Dataset Distillation}

Dataset distillation (DD) seeks to synthesize a smaller dataset $\mathcal{S}=\{(\tilde x_i, \tilde y_i)\}_{i=1}^m$ that preserves the essential information of a large dataset $\mathcal{D} = \{(x_i, y_i)\}_{i=1}^n$, where $n\gg m$. Formally, the distilled data set is obtained by
\[
\mathcal{S}^\star = \arg\min_{\mathcal{S}} D\big(\phi(\mathcal{D}), \phi(\mathcal{S})\big),
\]
where $D(\cdot,\cdot)$ denotes a divergence (e.g., maximum mean discrepancy, or Wasserstein distance) between representations computed by a feature map $\phi$. The distilled dataset $\mathcal{S^*}$ is then used to train a model whose performance approaches that of the model trained on the full data $\mathcal{D}$.

Among DD methods, DM stands out for its efficiency and avoidance of complex bi-level optimization. DM focuses on aligning the empirical distributions of real and synthetic data in an embedding space. Let \( \psi_\vartheta: \mathbb{R}^d \to \mathbb{R}^{d'} \) be an embedding function parameterized by \( \vartheta \sim P_\vartheta \) (e.g., a randomly initialized neural network), and Let \(A(\cdot, \omega)\) be a differentiable data augmentation operator parameterized by \(\omega \sim \Omega\), where \(\Omega\) denotes the domain (or distribution) of augmentation parameters, referred to the generalization ball. The DM objective is:
\[
\min_{\mathcal{S}} \, \mathbb{E}_{\vartheta \sim P_\vartheta, \omega \sim \Omega} \bigg\| \frac{1}{|\mathcal{D}|} \sum_{(x,y) \in \mathcal{D}} \psi_\vartheta \left( A(x, \omega) \right) - \frac{1}{|\mathcal{S}|} \sum_{(\tilde{x},\tilde{y}) \in \mathcal{S}} \psi_\vartheta \left( A(\tilde{x}, \omega) \right) \bigg\|_2.
\]
By averaging over multiple embeddings \( \psi_\vartheta \), DM provides a robust approximation of the input space, enabling direct optimization of \( \mathcal{S} \) without tracking full training dynamics.

However, traditional DM methods operate exclusively in Euclidean latent spaces, where DM is limited to Euclidean data. This motivates us to extend DM to product spaces that incorporate diverse Riemannian geometries to align better with the underlying manifold structures of real data.

\subsection{Riemannian Manifolds}

Real-world data often exhibit non-linear structures that are poorly captured by Euclidean geometry alone, motivating the use of Riemannian manifolds to model curved spaces \citep{do1992riemannian}. A Riemannian manifold \( (\mathcal{M}, g) \) is a smooth space equipped with a metric \( g \) that defines distances and angles locally. We focus on three families of constant-curvature manifolds: Euclidean (zero curvature), hyperbolic (negative curvature), and spherical (positive curvature). These spaces capture flat, hierarchical, and cyclical patterns, respectively.

The Euclidean space \( \mathbb{E}^n \) with dimension $n$ is the familiar flat geometry with the standard inner product metric \( g^\mathbb{E} \). It excels at modeling Euclidean data but struggles with non-Euclidean structure.

The hyperbolic space \( \mathbb{H}^n_c \) with curvature \( c < 0 \) is well-suited for hierarchical data due to its exponential volume growth. Under the Poincar\'{e} ball model \citep{mathieu2019continuoushierarchicalrepresentationspoincare}, \( 
\mathbb{H}^n_c = \left\{ z \in \mathbb{R}^n: \| z \|_{2} < \frac{1}{\sqrt{-c}} \right\} \) with metric $g^\mathbb{H}_z = \lambda_z^2 \,  g^\mathbb{E}$ for $\lambda_z = \frac{2}{1 - c \|z\|_2^2}$, where $z$ is the embedding.

The spherical space \( \mathbb{S}^n_c \) with curvature \( c > 0 \) models directional or cyclical data, such as angles or cycles. It is defined as \( \mathbb{S}^n_c = \{ z \in \mathbb{R}^{n+1} : \langle z, z \rangle = 1/c \} \), with metric $g_z^{\mathbb{S}}$ induced from the ambient Euclidean space, preserving rotational symmetries \citep{meilă2023manifoldlearningwhathow}.

To handle mixed structures in real data, the following product manifold can be constructed:
\[
\mathcal{M} = \mathcal{M}_1 \times \mathcal{M}_2 \times \cdots \times \mathcal{M}_K,
\]
with metric $g^\mathcal{M}=(g^{\mathcal{M}_{1}},\cdots,g^{\mathcal{M}_{K}})$, where each \( \mathcal{M}_k \) is \( \mathbb{E}^{d_k} \), \( \mathbb{H}^{d_k}_{c_k} \), or \( \mathbb{S}^{d_k}_{c_k} \). For any point \( z = (z^{(1)}, \dots, z^{(K)}) \in \mathcal{M} \), the tangent space decomposes additively, and distances are computed component-wise. This product construction allows us to embed data into a unified space that adapts to diverse geometries, forming the basis for our geometry-aware distribution matching in GeoDM.

\section{Proposed Method - GeoDM}
\label{sec:method}

We now develop a geometry-aware data distillation framework which can take advantage of the Euclidean–hyperbolic–spherical product manifold to approximate the underlying data manifold. The training procedure of our framework is illustrated in Figure~\ref{fig:process}, with pseudo code described by  Algorithm~\ref{alg:productspace} in Appendix~\ref{sec:algo}.

\begin{figure}[h!]
    \centering
    \vspace{-0.5cm}
    \includegraphics[width=1.0\textwidth]{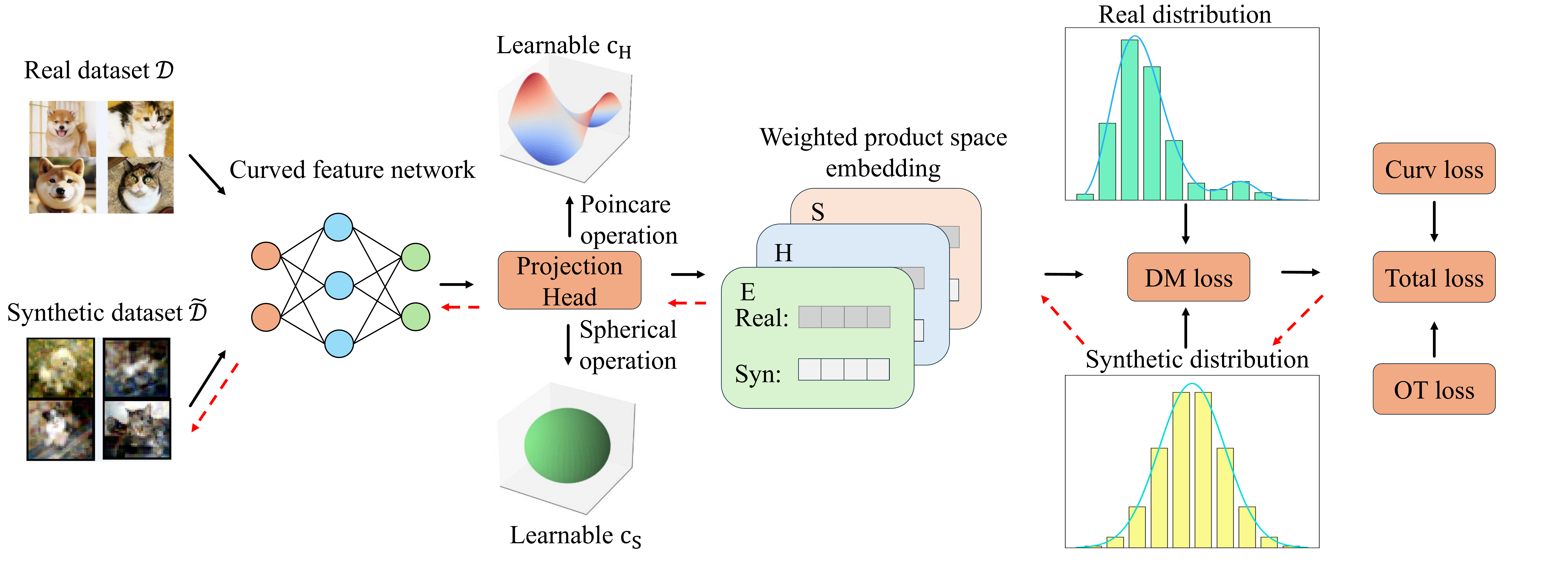} 
    \caption{Overall pipeline of our GeoDM: Real and synthetic data are first processed by a curved feature network, where hyperbolic and spherical branches incorporate learnable curvature and dedicated projection heads. The resulting embeddings are mapped to the Euclidean–hyperbolic–spherical product space. DM aligns real and synthetic features, while a geometry-aware optimal transport (OT) loss further couples the three geometries and preserves class-conditional mass.}
    \label{fig:process}
\end{figure}

\subsection{Product Space for Distribution Matching}

A central part of our framework is the design of the \emph{product space} that admits multiple geometries. Specifically, we define
\begin{equation}
    \mathcal{P} = \mathbb{E}^{d_E} \times \mathbb{H}^{d_H}_{c_H} \times \mathbb{S}^{d_S}_{c_S},
\end{equation}
where $\mathbb{E}^{d_E}$ is the $d_E$-dimensional Euclidean space, $\mathbb{H}^{d_H}_{c_H}$ is the $d_H$-dimensional hyperbolic space with curvature $c_H < 0$, and $\mathbb{S}^{d_S}_{c_S}$ is the $d_S$-dimensional spherical space with curvature $c_S > 0$. Each real or synthetic sample is embedded into $\mathcal{P}$, where three components stands for complementary geometric properties of the data. 

To embed data into $\mathcal{P}$, we implement a Riemannian convolutional neural network (CNN) \citep{masci2015geodesic}, which generalizes convolutional operations to non-Euclidean domains. In particular, spherical convolutions are used to extract features invariant to rotations on $\mathbb{S}^{d_S}_{c_S}$, while hyperbolic layers map Euclidean features into $\mathbb{H}^{d_H}_{c_H}$ through exponential and logarithmic maps to ensure consistency with the hyperbolic geometry. Together with standard Euclidean convolutions, this design yields geometry-aware features that align with their respective factor spaces of $\mathcal{P}$.

Given real and synthetic embeddings $z_i=(z_i^E,z_i^H,z_i^S),\,\tilde z_j=(\tilde z_j^E,\tilde z_j^H,\tilde z_j^S)\in\mathcal P$, we form a product-space feature by concatenating the three geometric components:
\[
\Psi_{\mathcal P}(z)
= \big[\,\psi_E(z^E)\;;\;\psi_H(z^H)\;;\;\psi_S(z^S)\,\big]
\quad\text{and}\quad
\Psi_{\mathcal P}(\tilde z)
= \big[\,\psi_E(\tilde z^E)\;;\;\psi_H(\tilde z^H)\;;\;\psi_S(\tilde z^S)\,\big].
\]
We then align the real and synthetic product features via the NCFM objective \citep{wang2025ncfm}:
\begin{equation}
\label{equ:ncfm}
\mathcal L_{\mathrm{DM}}
= \mathrm{NCFM}\!\left(\{\Psi_{\mathcal P}(z_i)\}_{i=1}^n,\;\{\Psi_{\mathcal P}(\tilde z_j)\}_{j=1}^m\right),
\end{equation}
which performs distribution matching directly in the product space (details of NCFM omitted here for brevity; see \cite{wang2025ncfm}).


To further enhance the expressiveness of the product space, we introduce two complementary mechanisms, i.e., \emph{learnable curvature} and \emph{learnable weights}, which allow the geometry of each factor space to adapt to the intrinsic structure of relevant data and control their relative contributions during training. 

\paragraph{Learnable curvature.}

We treat the curvature parameters of the hyperbolic and spherical components, $c_H<0$ and $c_S>0$, as learnable variables optimized together with the network weights. To ensure stability and prevent degenerate solutions, we introduce a curvature loss
\begin{equation}
\label{equ:curv}
\mathcal{L}_{\text{curv}}
   = \lambda_H\,R_H(z^H,c_H)
   + \lambda_S\,R_S(z^S,c_S),
\end{equation}
where $\lambda_H,\lambda_S$ are fixed hyperparameters.
Let $r_H = 1/\sqrt{-c_H}$ and $r_S = 1/\sqrt{c_S}$ denote the radius of the hyperbolic Poincaré ball
and the sphere, respectively. 
We regularize embeddings by penalizing their deviation from these radii:
\[
R_H(z^H,c_H)
   = \mathbb{E}\big[\,\big|\|z^H\|_2 - r_H\big|\,\big], 
\qquad
R_S(z^S,c_S)
   = \mathbb{E}\big[(\|z^S\|_2 - r_S)^2\big].
\]
where $R_H$ penalizes hyperbolic embeddings that approach the Poincaré-ball boundary and regularizes the magnitude of $c_H$, while $R_S$ encourages spherical embeddings to stay near the unit sphere and keeps $c_S$ well-scaled. We fix the dimensions $(d_E,d_H,d_S)$ across all datasets rather than learning them, as varying dimensionality often introduces extra degrees of freedom that destabilize optimization and reduce generalization. 

\paragraph{Learnable weights.}
Given an embedded pair of points $z=(z^E,z^H,z^S)$ and $\tilde z=(\tilde z^E,\tilde z^H,\tilde z^S)$, the product-space distance is defined as
\begin{equation}
\label{equ:productdis}
    d_{\mathcal{P}}^2(z,\tilde z) = \alpha \, \|z^E - \tilde z^E\|^2 
    + \beta \, d_{\mathbb{H}_{c_H}}^2(z^H,\tilde z^H) 
    + \gamma \, d_{\mathbb{S}_{c_S}}^2(z^S,\tilde z^S),
\end{equation}
where $\alpha, \beta, \gamma > 0$ are learnable weights controlling the relative contributions of Euclidean, hyperbolic, and spherical components. We normalize them as
\begin{equation}
\label{equ:weight}
    (\tilde{\alpha},\tilde{\beta},\tilde{\gamma})
      = \operatorname{softmax}(\alpha, \beta, \gamma)
      = \frac{\big(e^{\alpha},\, e^{\beta},\, e^{\gamma}\big)}
             {e^{\alpha}+e^{\beta}+e^{\gamma}} .
\end{equation}
so that $\tilde{\alpha}+\tilde{\beta}+\tilde{\gamma}=1$. This formulation allows the model to automatically adjust the importance of each geometry during training while avoiding trivial solutions and ensuring stability.

By the joint learning of curvature $(c_H, c_S)$ and weights $(\tilde{\alpha}, \tilde{\beta}, \tilde{\gamma})$, the product space dynamically adapts to the intrinsic geometry of the data. This flexibility enables the synthetic data distribution to be aligned with the real data distribution under the most suitable geometric representation.

Besides, it is worth mentioning that our framework is agnostic to the specific choice of the distribution matching method. Although we adopt the NCFM method for concreteness, other distribution matching approaches can also be seamlessly integrated. As shown in Figure~\ref{fig:robust}, replacing NCFM with alternative matching schemes does not affect the improvement, demonstrating the robustness of our framework to the choice of the distribution matching objective.

\subsection{Optimal Transport}

To improve the representational faithfulness of the distilled data set, we design an OT loss to measure the correspondence between real and synthetic samples across the Euclidean, hyperbolic, and spherical components of the product manifold \citep{villani2008optimal}. This loss is used as an auxiliary objective to maintain the class-wise mass and avoid collapse, i.e., a single geometry dominates.

Let $\mathcal{X}$ denote the original data manifold and $\mathcal{X}_{\text{syn}}$ the synthetic data manifold, with corresponding probability measures $\rho \in \mathcal{P}(\mathcal{X})$ and $\rho_{\text{syn}} \in \mathcal{P}(\mathcal{X}_{\text{syn}})$. We construct an embedding
\[
    f_\theta : \mathcal{X} \longrightarrow \mathcal{P}
    = \mathbb{E}^{d_E} \times \mathbb{H}^{d_H}_{c_H} \times \mathbb{S}^{d_S}_{c_S},
\]
For embedded points $z=(z^E,z^H,z^S)$ and $z'=(\tilde z^E,\tilde z^H,\tilde z^S)$ in $\mathcal{P}$, we use Eq.~(\ref{equ:productdis}) as our product-space metric.
When optimizing the OT loss with respect to the synthetic embeddings $z$, 
we compute the Riemannian gradient $\operatorname{grad}_z \mathcal{L}_{\mathrm{OT}}$ in the tangent space $T_z\mathcal{P}$, 
obtained by projecting the Euclidean derivative onto $T_z\mathcal P$ under the product metric $g^\mathcal P$. 
A descent step is taken in $T_z\mathcal P$, and the result is mapped back to the manifold via the exponential map, 
so the updated embedding remains inside~$\mathcal P$. 
The OT loss is then defined as the squared $2$-Wasserstein distance between the push-forward measures $\mu = f_\theta \# \rho$ and $\nu = f_\theta \# \rho_{\text{syn}}$:
\begin{equation}
\label{equ:otloss}
    \mathcal{L}_{\mathrm{OT}}(\rho, \rho_{\mathrm{syn}})
    = W_2^2(\mu,\nu)
    = \inf_{\pi \in \Pi(\mu,\nu)}
       \int_{\mathcal{P} \times \mathcal{P}}
           d_{\mathcal{P}}^2(z,\tilde z)\, d\pi(z,\tilde z),
\end{equation} 
where $\Pi(\mu,\nu)$ is the set of couplings with marginals $\mu$ and $\nu$, $d\pi(z,\tilde z)$ indicates integration with respect to the coupling $\pi$. During training, this OT loss is combined with the primary distribution-matching objective, and gradients for the hyperbolic and spherical components are computed in their tangent representations and mapped back through exponential maps, ensuring that the learned embeddings of synthetic data remain valid manifold points throughout optimization.

\subsection{Overall Objective}

To guide the optimization of the distilled dataset, we integrate all objectives into a single training criterion that balances distribution matching, geometry-aware transport, and curvature regularization. Specifically, the overall loss is defined as
\begin{equation}
\label{equ:totalloss}
    \mathcal{L}_{\text{total}}
    = \mathcal{L}_{\text{DM}}
    \;+\;
    \lambda_{\text{OT}}\mathcal{L}_{\text{OT}}
    \;+\;
    \lambda_{\text{curv}}\mathcal{L}_{\text{curv}},
\end{equation}
where $\mathcal{L}_{\text{DM}}$ aligns the real and synthetic data distributions in the product space, $\mathcal{L}_{\text{OT}}$ further refines this alignment via OT computed in the tangent spaces of curved components, and $\mathcal{L}_{\text{curv}}$ stabilizes the learnable curvature parameters $\{c_H, c_S\}$. In addition, learnable weights $(\tilde{\alpha}, \tilde{\beta}, \tilde{\gamma})$ regulate the relative contributions of the Euclidean, hyperbolic, and spherical components. By minimizing $\mathcal{L}_{\text{total}}$ via gradient-based optimization methods, the synthetic data progressively approximate the real data distribution while retaining the intrinsic manifold geometry, with the expectation of strong generalization across diverse tasks.

\section{Theoretical Analysis}
\label{sec:theory}

Given the proposed framework above, we are wondering if our intuition could be theoretically justified on why dataset distillation benefits from incorporating non-Euclidean geometry and why modeling data in a product space helps improve performance.

\paragraph{Notions and notations.}
We denote by $(\mathcal M,d_\mathcal M)$ the intrinsic data manifold with geodesic metric $d_\mathcal M$, 
and by $(\mathcal X_\kappa,d_{\mathcal X_\kappa})$ the matching space (Euclidean, hyperbolic, or spherical) with its canonical metric. 
A representation map $\Phi_\kappa:\mathcal M\to\mathcal X_\kappa$ induces the push-forward measure 
$\Phi_{\kappa\#}\rho(B)=\rho(\Phi_\kappa^{-1}(B))$ for any distribution $\rho$ on $\mathcal M$. 
The population risk is $R(\theta;\mu)=\mathbb E_{(z,y)\sim\mu}[\ell(f_\theta(z),y)]$, 
and $\mathrm{IPM}^{(\mathcal X_\kappa)}$ denotes an integral probability metric on $(\mathcal X_\kappa,d_{\mathcal X_\kappa})$. 
We write $\mu_y$ for the class prior of label $y$. 
Finally, the geometric distortion introduced by $\Phi_\kappa$ is controlled by a multiplicative factor $\beta(\kappa)$ and an additive residual $\varepsilon_{\mathrm{dist}}(\kappa)$, 
which measures how faithfully the matching space preserves the curvature of the original manifold.

\begin{assumption}[Regularity and convergence]
\label{assump:regularity}
$\quad$
\begin{itemize}[leftmargin=.6cm]
    \item[i)] \textit{(Data manifold)} The real data distribution $\mu$ is supported on a measurable manifold $(\mathcal M,d_\mathcal M)$.
    \item[ii)] \textit{(Lipschitz loss)} For all $\theta$ and $y$, the loss is $L$-Lipschitz with respect to the input:
    \[
    |\ell(f_\theta(z),y) - \ell(f_\theta(z'),y)| \;\le\; L\, d_{\mathcal X_\kappa}(z,z'), 
    \quad \forall z,z'\in\mathcal X_\kappa.
    \]
    \item[iii)]  \textit{(Algorithmic stability)} The training algorithm $\mathcal A$ is uniformly stable with constant $\varepsilon_{\mathrm{stab}}$, i.e.,
    \[
    \big| R(\theta^{\rho_1};\mu) - R(\theta^{\rho_2};\mu)\big| \;\le\; \varepsilon_{\mathrm{stab}}, 
    \quad \forall \rho_1,\rho_2.
    \]
    \item[iv)] \textit{(Statistical convergence)} 
The empirical distilled distribution $\nu_M$ satisfies, for every label $y$,
\[
\mathrm{IPM}^{(\mathcal X_\kappa)}\!\big(
   \Phi_{\kappa\#}\mu(\cdot|y),\,
   \Phi_{\kappa\#}\nu_M(\cdot|y)
\big) 
\;\le\; \mathcal E_{\mathrm{stat}}^{(y)}(M) + \varepsilon_{\mathrm{opt}},
\]
where $\mathcal E_{\mathrm{stat}}^{(y)}(M)\to 0$ as $M\to\infty$.  
Equivalently, the class-weighted average satisfies
\[
\sum_y \mu_y \,
\mathrm{IPM}^{(\mathcal X_\kappa)}\!\big(
   \Phi_{\kappa\#}\mu(\cdot|y),\,
   \Phi_{\kappa\#}\nu_M(\cdot|y)
\big)
\;\le\; \mathcal E_{\mathrm{stat}}(M) + \varepsilon_{\mathrm{opt}},
\]
with $\mathcal E_{\mathrm{stat}}(M)\to 0$ as $M\to\infty$. 
Here “$\#$” denotes the push-forward measure, i.e., 
$\Phi_{\kappa\#}\mu(B)=\mu(\Phi_\kappa^{-1}(B))$ for all measurable $B\subseteq\mathcal X_\kappa$.
\end{itemize}

\end{assumption}

\begin{theorem}[Geometry-driven risk decomposition]
\label{thm:risk-decomposition}
Let $\mu$ be the real data distribution and $\nu_M$ the distilled data distribution of size $M$. Then
\[
\Delta \;\le\; 
L\Big(
   \beta(\kappa)\cdot \big[\overline{\mathcal E}_{\mathrm{stat}}(M) + \varepsilon_{\mathrm{opt}}\big]
   + C\,\varepsilon_{\mathrm{dist}}(\kappa)
\Big) \;+\; \varepsilon_{\mathrm{stab}},
\]
where 
$\overline{\mathcal E}_{\mathrm{stat}}(M)
:= \sum_y \mu_y \,\mathcal E_{\mathrm{stat}}^{(y)}(M)$
is the class-weighted average statistical error,
and $C$ is a universal constant depending only on the chosen IPM.
\end{theorem}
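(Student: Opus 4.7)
I interpret $\Delta$ as the excess risk $R(\theta^{\nu_M};\mu)-R(\theta^{\mu};\mu)$, where $\theta^{\rho}$ denotes the output of the training algorithm $\mathcal A$ when fed a training distribution $\rho$. The overall plan is a three-step decomposition that isolates a distributional gap between $\mu$ and $\nu_M$ on the intrinsic manifold, the geometric distortion paid when routing that gap through the matching space $\mathcal X_\kappa$, and an algorithmic-stability residual. Concretely I would begin with the identity
\[
\Delta
\;=\;\underbrace{R(\theta^{\nu_M};\mu)-R(\theta^{\nu_M};\nu_M)}_{\text{distributional gap}}
\;+\;\underbrace{R(\theta^{\nu_M};\nu_M)-R(\theta^{\mu};\mu)}_{\text{stability gap}},
\]
and invoke Assumption~\ref{assump:regularity}(iii) to bound the stability gap by $\varepsilon_{\mathrm{stab}}$, which provides the trailing additive term in the claim.

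For the distributional gap I would combine the $L$-Lipschitz property from Assumption~\ref{assump:regularity}(ii) with the Kantorovich--Rubinstein characterization of an IPM. Conditioning on the label $y$ via the law of total expectation gives
\[
\big|R(\theta;\mu)-R(\theta;\nu_M)\big|
\;\le\; L\,\sum_y \mu_y\,\mathrm{IPM}^{(\mathcal M)}\!\big(\mu(\cdot\mid y),\,\nu_M(\cdot\mid y)\big).
\]
Since the distillation objective only accesses samples through the matching space, the next step is to transfer each class-conditional IPM from $(\mathcal M,d_\mathcal M)$ to $(\mathcal X_\kappa,d_{\mathcal X_\kappa})$ via $\Phi_\kappa$. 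The bi-Lipschitz-type control with multiplicative factor $\beta(\kappa)$ and additive residual $\varepsilon_{\mathrm{dist}}(\kappa)$ implies that any $1$-Lipschitz test function on $\mathcal M$ pulls back to a $\beta(\kappa)$-Lipschitz function on $\mathcal X_\kappa$ up to a perturbation of size $\varepsilon_{\mathrm{dist}}(\kappa)$, so taking the supremum over the test class gives
\[
\mathrm{IPM}^{(\mathcal M)}\!\big(\mu(\cdot\mid y),\nu_M(\cdot\mid y)\big)
\;\le\; \beta(\kappa)\,\mathrm{IPM}^{(\mathcal X_\kappa)}\!\big(\Phi_{\kappa\#}\mu(\cdot\mid y),\,\Phi_{\kappa\#}\nu_M(\cdot\mid y)\big) + C\,\varepsilon_{\mathrm{dist}}(\kappa),
\]
where $C$ depends only on the norm of the chosen test-function class. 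Substituting Assumption~\ref{assump:regularity}(iv) into the class-weighted sum and recognizing $\sum_y \mu_y\,\mathcal E_{\mathrm{stat}}^{(y)}(M)=\overline{\mathcal E}_{\mathrm{stat}}(M)$ yields the desired main term.

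Combining the two bounds produces exactly
\[
\Delta \;\le\; L\Big(\beta(\kappa)\,\big[\overline{\mathcal E}_{\mathrm{stat}}(M)+\varepsilon_{\mathrm{opt}}\big] + C\,\varepsilon_{\mathrm{dist}}(\kappa)\Big) + \varepsilon_{\mathrm{stab}}.
\]
The step I expect to be the main obstacle is the geometric transfer of the IPM: obtaining the additive--multiplicative split with $\beta(\kappa)$ and $\varepsilon_{\mathrm{dist}}(\kappa)$ \emph{uniformly} across the three curvature regimes (flat, hyperbolic, spherical) requires fixing a common test-function class on $\mathcal M$ and then verifying that the residual $\varepsilon_{\mathrm{dist}}(\kappa)$ genuinely enters additively rather than coupling multiplicatively with the IPM term. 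A Kantorovich--Rubinstein plus triangle-inequality argument on geodesic distances should suffice, but care is needed to ensure that the universal constant $C$ does not secretly absorb a curvature-dependent factor and that the pull-back Lipschitz constant is controlled by $\beta(\kappa)$ alone.
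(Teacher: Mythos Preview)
Your high-level decomposition (triangle split into a distribution-shift term and a stability term, then bounding the latter by $\varepsilon_{\mathrm{stab}}$ and the former via Lipschitzness $+$ a class-conditional IPM $+$ the statistical convergence assumption) is exactly the skeleton the paper uses. The gap is in the \emph{direction} of the geometric transfer, and it is not a cosmetic issue.

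Assumption~\ref{assump:regularity}(ii) gives $L$-Lipschitzness of $z\mapsto\ell(f_\theta(z),y)$ on $\mathcal X_\kappa$, not on $\mathcal M$. Hence the Kantorovich--Rubinstein step yields, for each $y$, a bound in terms of $W_1^{(\mathcal X_\kappa)}\big(\Phi_{\kappa\#}\mu(\cdot\mid y),\Phi_{\kappa\#}\nu_M(\cdot\mid y)\big)$, not $\mathrm{IPM}^{(\mathcal M)}\big(\mu(\cdot\mid y),\nu_M(\cdot\mid y)\big)$ as you wrote. More importantly, the distortion control is the one-sided inequality $d_{\mathcal X_\kappa}(\Phi_\kappa(x),\Phi_\kappa(x'))\le \beta(\kappa)\,d_{\mathcal M}(x,x')+\varepsilon_{\mathrm{dist}}(\kappa)$. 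Plugging this into the \emph{primal} (coupling) definition of $W_1$ gives
\[
W_1^{(\mathcal X_\kappa)}\big(\Phi_{\kappa\#}\mu(\cdot\mid y),\Phi_{\kappa\#}\nu_M(\cdot\mid y)\big)
\;\le\;\beta(\kappa)\,W_1^{(\mathcal M)}\big(\mu(\cdot\mid y),\nu_M(\cdot\mid y)\big)+\varepsilon_{\mathrm{dist}}(\kappa),
\]
which is the \emph{opposite} inequality to the one you state. Your dual ``pull-back'' argument---that a $1$-Lipschitz test function on $\mathcal M$ becomes $\beta(\kappa)$-Lipschitz on $\mathcal X_\kappa$---would require $\Phi_\kappa^{-1}$ to be $\beta(\kappa)$-Lipschitz, i.e.\ a co-Lipschitz bound on $\Phi_\kappa$, which is neither assumed nor derivable from the upper-bound distortion. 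The correct order is therefore: (i) bound the risk gap by $L\cdot \overline{W}_{1,y}^{(\mathcal X_\kappa)}$ on the push-forwards; (ii) use the distortion inequality in its primal form to pass to $\beta(\kappa)\,\overline{W}_{1,y}^{(\mathcal M)}+\varepsilon_{\mathrm{dist}}(\kappa)$; (iii) invoke the statistical convergence on the intrinsic manifold to replace $\overline{W}_{1,y}^{(\mathcal M)}$ by $\overline{\mathcal E}_{\mathrm{stat}}(M)+\varepsilon_{\mathrm{opt}}$. With this reordering the additive--multiplicative split falls out with $C=1$ for $W_1$ and no curvature-dependent contamination of $C$.
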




Theorem~\ref{thm:risk-decomposition} establishes that the generalization gap of dataset distillation can be decomposed into statistical $\overline{\mathcal E}_{\mathrm{stat}}$, stability $\varepsilon_{\mathrm{stab}}$, and geometric $\varepsilon_{\mathrm{dist}}$ terms. Particularly, the geometry plays a critical role: if the matching space fails to capture the curvature of the underlying data manifold, the bound can degrade significantly. To improve the bound in this case, we assume that the real data distribution $\mu$ is supported on a mixed-curvature product manifold
\[
\mathcal M^\star \;\subset\; \mathbb E^{d_E} \times \mathbb H^{d_H}_{c_H} \times \mathbb S^{d_S}_{c_S},
\]
contained in a geodesic ball of radius $R$. This theorem is both mathematically natural and empirically grounded: hyperbolic spaces are known to embed taxonomies with exponentially smaller distortion than Euclidean spaces \citep{sarkar2011low,verbeek2014hyperbolic}, spherical spaces are the canonical setting for directional statistics and cyclic representations \citep{mardia2009directional}, and Euclidean spaces provide accurate local approximations by standard Riemannian arguments. Therefore, we can assume that real-world datasets rarely conform to a purely Euclidean geometry but instead exhibit a mixture of flat, hierarchical, and angular structures.

Under the assumption, we show that product spaces of constant curvature yield provably tighter guarantees than Euclidean spaces alone. Specifically:

\begin{theorem}[Product spaces yield tighter bounds]
\label{thm:product}
Under Assumption~\ref{assump:regularity}, 
there exists a constant $\delta>0$ such that the generalization gap of distillation in the product space satisfies
\[
\Delta_{\mathrm{product}}
\;\le\;
\Delta_{\mathrm{Euclid}} - L\,\delta,
\]
where $\Delta_{\mathrm{Euclid}}$ denotes the Euclidean-space bound in Theorem~\ref{thm:risk-decomposition}.
\end{theorem}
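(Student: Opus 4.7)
The plan is to apply Theorem~\ref{thm:risk-decomposition} once with $\mathcal X_\kappa$ equal to the product manifold $\mathbb E^{d_E}\times\mathbb H^{d_H}_{c_H}\times\mathbb S^{d_S}_{c_S}$ and once with $\mathcal X_\kappa$ equal to a purely Euclidean space $\mathbb E^{D}$, and then compare the two bounds term by term. Since Assumption~\ref{assump:regularity} fixes a common algorithmic stability constant $\varepsilon_{\mathrm{stab}}$, a common optimization error $\varepsilon_{\mathrm{opt}}$, and a common sample budget $M$, the stability term cancels and $\overline{\mathcal E}_{\mathrm{stat}}(M)$ is identical across the two instantiations. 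The only remaining sources of a gap are the multiplicative distortion factor $\beta(\kappa)$ and the additive distortion residual $\varepsilon_{\mathrm{dist}}(\kappa)$, so the entire argument reduces to exhibiting a strictly positive gap between these two quantities.

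First I would argue that in the product-space instantiation one may take $\Phi_{\mathrm{product}}$ to be the canonical inclusion of $\mathcal M^\star$ into its ambient product manifold, which preserves geodesic distances exactly; this gives $\varepsilon_{\mathrm{dist}}(\kappa_{\mathrm{product}})=0$ and $\beta(\kappa_{\mathrm{product}})=1$ (or arbitrarily close, if a smooth extension to a neighbourhood of $\mathcal M^\star$ is preferred). Second, for the Euclidean instantiation I would invoke the classical distortion lower bounds already cited in the excerpt: the Sarkar/Verbeek-type constructions \citep{sarkar2011low,verbeek2014hyperbolic} imply that any map from $\mathbb H^{d_H}_{c_H}$ into $\mathbb E^{D}$, restricted to a ball of radius $R$, must contract or stretch pairwise distances by a factor depending on $|c_H|R$, and Gauss's Theorema Egregium gives an analogous obstruction for $\mathbb S^{d_S}_{c_S}$. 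Together these imply $\varepsilon_{\mathrm{dist}}(\kappa_{\mathrm{Euclid}})\ge \eta(c_H,c_S,R)>0$ and $\beta(\kappa_{\mathrm{Euclid}})\ge 1$ bounded strictly away from $\beta(\kappa_{\mathrm{product}})$. Subtracting the two Theorem~\ref{thm:risk-decomposition} bounds then yields
\[
\Delta_{\mathrm{Euclid}} - \Delta_{\mathrm{product}}
\;\ge\; L\,C\,\eta(c_H,c_S,R)
\;+\; L\,\bigl(\beta(\kappa_{\mathrm{Euclid}})-1\bigr)\bigl[\overline{\mathcal E}_{\mathrm{stat}}(M)+\varepsilon_{\mathrm{opt}}\bigr],
\]
which is strictly positive, so $\delta$ can be taken to be any positive lower bound on the right-hand side.

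The main obstacle will be making the Euclidean distortion lower bound compatible with the IPM-based residual $\varepsilon_{\mathrm{dist}}(\kappa)$: the cited results are typically phrased as worst-case bi-Lipschitz metric distortions, whereas $\varepsilon_{\mathrm{dist}}$ quantifies how push-forward measures diverge under the chosen IPM. Bridging this gap requires an IPM-duality step, in which one constructs an explicit $1$-Lipschitz test function (for instance, a distance-to-reference-point function supported on the curved factor) whose integral against $\Phi_{\mathrm{Euclid}\#}\mu$ differs by a curvature-controlled amount from its integral against any candidate distilled measure constrained to a flat image. Care must also be taken to verify that $\eta(c_H,c_S,R)$ does not collapse as $M\to\infty$; since the curvatures are constant and $R$ is fixed, the quantity is a curvature-driven floor rather than a sample-size-driven one, but the argument must keep the geometric and statistical contributions cleanly separated so that the strict inequality survives in the limit.
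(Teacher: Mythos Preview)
Your high-level architecture matches the paper's exactly: instantiate Theorem~\ref{thm:risk-decomposition} once for the product space (obtaining $\beta_{\mathrm{prod}}=1$, $\varepsilon_{\mathrm{dist,prod}}\approx 0$ via the canonical inclusion) and once for a Euclidean target, then subtract. Where you diverge is in the tools you invoke for the Euclidean lower bound, and those citations do not deliver what you need. Sarkar and Verbeek establish that trees embed \emph{into} hyperbolic space with arbitrarily small distortion---the reverse of the direction you require; they give no obstruction for maps $\mathbb H^{d_H}_{c_H}\to\mathbb E^D$. The paper instead builds an exponentially large $\Delta$-separated set in a hyperbolic ball of radius $R$ (a packing/volume argument), embeds a $b$-ary tree of height $\Theta(R)$ with constant distortion, and then applies the Linial--London--Rabinovich lower bound that any such tree requires distortion $\Omega(\sqrt{h})$ into Hilbert space; this forces either $\beta_{\mathrm{euc}}\gtrsim\sqrt{R}$ (or $\gtrsim R$ under strong separation) or an additive residual of the same order. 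Similarly, Gauss's Theorema Egregium only rules out isometries and gives no quantitative bi-Lipschitz floor for the spherical factor; the paper instead uses the explicit arc--chord inequality $\tfrac{2}{\pi}\,d_{\mathrm{geo}}\le d_{\mathrm{chord}}\le d_{\mathrm{geo}}$ to extract a sharp, radius-independent $\pi/2$ constant.

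The ``main obstacle'' you anticipate---bridging worst-case metric distortion and the IPM-defined residual---does not actually arise. In the proof of Theorem~\ref{thm:risk-decomposition}, the quantities $\beta(\kappa)$ and $\varepsilon_{\mathrm{dist}}(\kappa)$ enter precisely as the constants in a pointwise affine bound $d_{\mathcal X_\kappa}(\Phi_\kappa(x),\Phi_\kappa(x'))\le \beta(\kappa)\,d_{\mathcal M}(x,x')+\varepsilon_{\mathrm{dist}}(\kappa)$, which is then integrated against a $W_1$ coupling. So the lower bounds you need are purely metric---you must show that for \emph{every} Euclidean representation the pair $(\beta,\varepsilon_{\mathrm{dist}})$ cannot both be small---and no IPM-duality test-function construction is required. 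Your curvature-floor intuition (that $\eta$ survives as $M\to\infty$) is correct, but getting an actual $\eta>0$ requires the packing/tree and arc--chord arguments above rather than the references you cite.
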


The improvement $\delta$ comes from two sources: for hierarchical data, Euclidean embeddings suffer exponentially growing distortion with radius $\tau$, whereas hyperbolic embeddings represent such structure isometrically; for angular or periodic data, Euclidean embeddings reduce geodesic arcs to chords, introducing a fixed distortion gap (at least a $\pi/2$ factor), while spherical embeddings preserve the geometry exactly. Flat data incurs no loss in either case. By combining these advantages, product manifolds reduce both multiplicative distortion $\mathcal E_{\mathrm{stat}}$ or $\varepsilon_{\mathrm{stab}}$,  and additive residual terms $\varepsilon_{\mathrm{dist}}$, giving rise to strictly tighter guarantees than Euclidean-only matching. A complete proof, including hyperbolic packing arguments and spherical arc–chord inequalities, is provided in Appendix~\ref{app:proof}.

\section{Experiments}

\subsection{Experimental Setup}

\paragraph{Datasets and Baselines.}

To validate the efficiency of our work, we conduct experiments on three datasets, including MNIST \citep{lecun1998gradient}, CIFAR-10 \citep{krizhevsky2009learning}, and CIFAR-100 \citep{krizhevsky2009learning}. 
We compare our method, GeoDM, against a broad range of baselines. Gradient-matching approaches include DC \citep{zhao2020gradient}, DCC \citep{lee2022dataset}, and DSA \citep{zhao2021dataset}. Distribution-matching methods considered are CAFE \citep{wang2022cafe}, DM \citep{zhao2023distribution}, IDM \citep{zhao2023improved}, M3D \citep{zhang2024m3d}, IID \citep{deng2024exploiting}, DREAM \citep{liu2023dream}, DSDM \citep{li2024dsdm}, G-VBSM \citep{shao2024GVBSM}, and NCFM \citep{wang2025ncfm}. We also evaluate against trajectory-matching techniques such as MTT \citep{cazenavette2022dataset}, FTD \citep{du2023minimizing}, and ATT \citep{liu2024dataset}. In addition, we include coreset-selection baselines, namely random selection, Herding \citep{welling2009herding, rebuffi2017icarl}, and K-center \citep{farahani2009facility, sener2017active}.

\paragraph{Configuration.}

We follow prior work \citep{wang2025ncfm} to conduct our experiments. Unless otherwise specified, we use a 3-layer convolutional network as the backbone. The matching optimization is performed for $T = 10^{4}$ iterations. The weight for the OT loss $\lambda_{OT}$ is set to 2, while the weight for the curvature regularization $\lambda_{curv}$ is set to 1. Both the hyperbolic and spherical curvatures are initialized uniformly at random, along with the weights of each factor space. For evaluation, models are trained for 1500 epochs. We use accuracy as the evaluation metric across all experiments. All experiments are repeated three times, with mean and standard deviation of results (refer to Appendix~\ref{sec:hyper} for results under more settings).

\vspace{-0.1cm}

\subsection{Result Analysis}

\vspace{-0.2cm}

\begin{table*}[t]
\centering
\caption{Comparison of dataset distillation methods on MNIST, CIFAR-10, and CIFAR-100. ``Whole'' denotes the test accuracy of a model trained on the full dataset. Note that methods marked with $^\dagger$ are quoted from \cite{wang2025ncfm}}. 
\label{tab:main}
\renewcommand{\arraystretch}{1.1} 
\resizebox{\linewidth}{!}{
\begin{tabular}{l|ccc|ccc|ccc}
\toprule
\textbf{Datasets} & \multicolumn{3}{c|}{MNIST} & \multicolumn{3}{c|}{CIFAR-10} & \multicolumn{3}{c}{CIFAR-100} \\
\midrule
IPC & 1 & 10 & 50 & 1 & 10 & 50 & 1 & 10 & 50 \\
Ratio & 0.017\% & 0.17\% & 0.83\% & 0.02\% & 0.2\% & 1\% & 0.2\% & 2\% & 10\% \\
\midrule
Whole & \multicolumn{3}{c|}{$99.6\pm0.0$} & \multicolumn{3}{c|}{$84.8\pm0.1$} & \multicolumn{3}{c}{$56.2\pm0.3$}  \\
\midrule
Random $^\dagger$ & $64.9\pm3.5$ & $95.1\pm0.9$ & $97.9\pm0.5$ & $14.4\pm2.0$ & $26.0\pm1.2$ & $43.4\pm1.0$ & $1.4\pm0.1$ & $5.0\pm0.2$ & $15.0\pm0.4$ \\
Herding $^\dagger$ & $89.2\pm1.6$ & $93.7\pm0.3$ & $94.8\pm0.3$ & $21.5\pm1.2$ & $31.6\pm0.7$ & $40.4\pm0.6$ & $8.4\pm0.3$ & $17.3\pm0.3$ & $33.7\pm0.5$ \\
K-center & $89.3\pm1.5$ & $84.4\pm1.7$ & $97.4\pm0.3$ & $21.5\pm1.3$ & $14.7\pm0.7$ & $27.0\pm1.4$ & $8.3\pm0.3$ & $7.1\pm0.2$ & $30.5\pm0.3$ \\
\midrule
DC & $91.7\pm0.5$ & $97.4\pm0.3$ & $98.8\pm0.2$ & $28.3\pm0.5$ & $44.9\pm0.5$ & $53.9\pm0.5$ & $12.8\pm0.3$ & $25.2\pm0.3$ & $31.5\pm0.2$ \\
DCC & - & - & - & $32.9\pm0.8$ & $49,4\pm0.5$ & $61.6\pm0.4$ & $13.3\pm0.3$ & $30.6\pm0.4$ & $40.0\pm0.3$ \\
DSA & $88.7\pm0.6$ & $97.8\pm0.1$ & $99.2\pm0.1$ & $28.8\pm0.7$ & $52.1\pm0.5$ & $60.6\pm0.5$ & $13.9\pm0.3$ & $32.3\pm0.3$ & $42.8\pm0.4$ \\
\midrule
MTT & - & - & - & $46.3\pm0.8$ & $65.3\pm0.7$ & $71.6\pm0.7$ & $24.3\pm0.3$ & $40.1\pm0.4$ & $47.7\pm0.2$ \\
ATT & - & - & - & $48.3\pm0.8$ & $67.7\pm0.7$ & $74.5\pm0.7$ & $26.1\pm0.3$ & $44.2\pm0.4$ & $51.2\pm0.2$ \\
FTD & - & - & - & $46.8\pm0.8$ & $66.6\pm0.7$ & $73.8\pm0.7$ & $25.2\pm0.3$ & $43.4\pm0.4$ & $50.7\pm0.2$ \\
\midrule
DM  & $89.7\pm0.6$ & $97.5\pm0.1$ & $98.6\pm0.1$ & $30.3\pm0.1$ & $48.9\pm0.6$ & $63.0\pm0.4$ & $11.4\pm0.3$ & $29.7\pm0.3$ & $43.6\pm0.4$ \\
CAFE  & $93.1\pm0.3$ & $97.2\pm0.2$ & $98.6\pm0.2$ & $42.6\pm3.3$ & $46.3\pm0.6$ & $55.5\pm0.6$ & $12.9\pm0.3$ & $27.8\pm0.3$ & $37.9\pm0.3$ \\
IDM  & - & - & - & $45.6\pm0.7$ & $58.6\pm0.1$ & $67.5\pm0.1$ & $20.1\pm0.3$ & $45.1\pm0.1$ & $50.0\pm0.2$ \\
M3D  & $94.4\pm0.2$ & $97.6\pm0.1$ & $98.2\pm0.3$ & $45.3\pm0.5$ & $63.5\pm0.2$ & $69.9\pm0.5$ & $26.2\pm0.3$ & $42.4\pm0.2$ & $50.9\pm0.7$ \\
IID  & - & - & - & $47.1\pm0.1$ & $59.9\pm0.1$ & $69.0\pm0.3$ & $24.6\pm0.1$ & $45.7\pm0.4$ & $51.3\pm0.4$ \\
DREAM & $95.3\pm0.2$ & $98.4\pm0.2$ & $99.0\pm0.1$ & $50.4\pm0.3$ & $67.2\pm0.2$ & $73.3\pm0.6$ & $26.7\pm0.2$ & $43.0\pm0.4$ & $48.5\pm0.5$ \\
DSDM & $94.5\pm0.4$ & $98.1\pm0.3$ & $98.7\pm0.2$ & $44.3\pm0.4$ & $66.1\pm0.3$ & $75.3\pm0.3$ & $19.2\pm0.4$ & $46.0\pm0.2$ & $53.1\pm0.5$ \\
G-VBSM $^\dagger$ & - & - & - & - & $46.5\pm0.7$ & $54.3\pm0.3$ & $16.4\pm0.7$ & $38.7\pm0.2$ & $45.7\pm0.4$ \\
NCFM  & $93.4\pm0.5$ & $95.7\pm0.6$ & $96.9\pm0.3$ & $49.5\pm0.3$ & $71.8\pm0.5$ & $77.4\pm0.3$ & $34.4\pm0.5$ & $48.4\pm0.3$ & $54.7\pm0.2$ \\
\midrule
\textbf{GeoDM} & $\mathbf{96.3\pm0.2}$ & $\mathbf{98.6\pm0.2}$ & $\mathbf{99.3\pm0.1}$ & $\mathbf{51.2\pm0.2}$ & $\mathbf{74.4\pm0.3}$ & $\mathbf{78.3\pm0.2}$ & $\mathbf{38.0\pm0.4}$ & $\mathbf{49.2\pm0.3}$ & $\mathbf{55.0\pm0.2}$ \\ 
\bottomrule
\end{tabular}}
\vspace{-0.3cm}
\end{table*}

Table~\ref{tab:main} summarizes the test accuracy (\%) of representative dataset distillation methods on MNIST, CIFAR-10, and CIFAR-100 under different images-per-class (IPC) budgets. On MNIST, \textbf{GeoDM} achieves $96.3\%$ accuracy at IPC\,=\,1, improving over the best baseline (M3D, $94.4\%$) by about $2$\%. On CIFAR-10, GeoDM attains $74.4\%$ at IPC\,=\,10, surpassing state-of-the-art by approximately $3$\%. For the more challenging CIFAR-100, GeoDM reaches $38.0\%$ and $49.2\%$ at IPC\,=\,1 and $10$, yielding gains of about $3$\% and $1$\% respectively. These results demonstrate more than incremental improvements: the consistent gains at low IPC indicate that incorporating curvature information allows the distilled synthetic images to encode richer geometric structure, compensating for severe data scarcity. As IPC increases, however, the relative improvement gradually diminishes, since the abundance of synthetic samples reduces the need for additional structural information

\begin{figure}[h!]
    \centering
    \begin{minipage}[t]{0.47\textwidth}
        \centering
        \includegraphics[width=\textwidth]{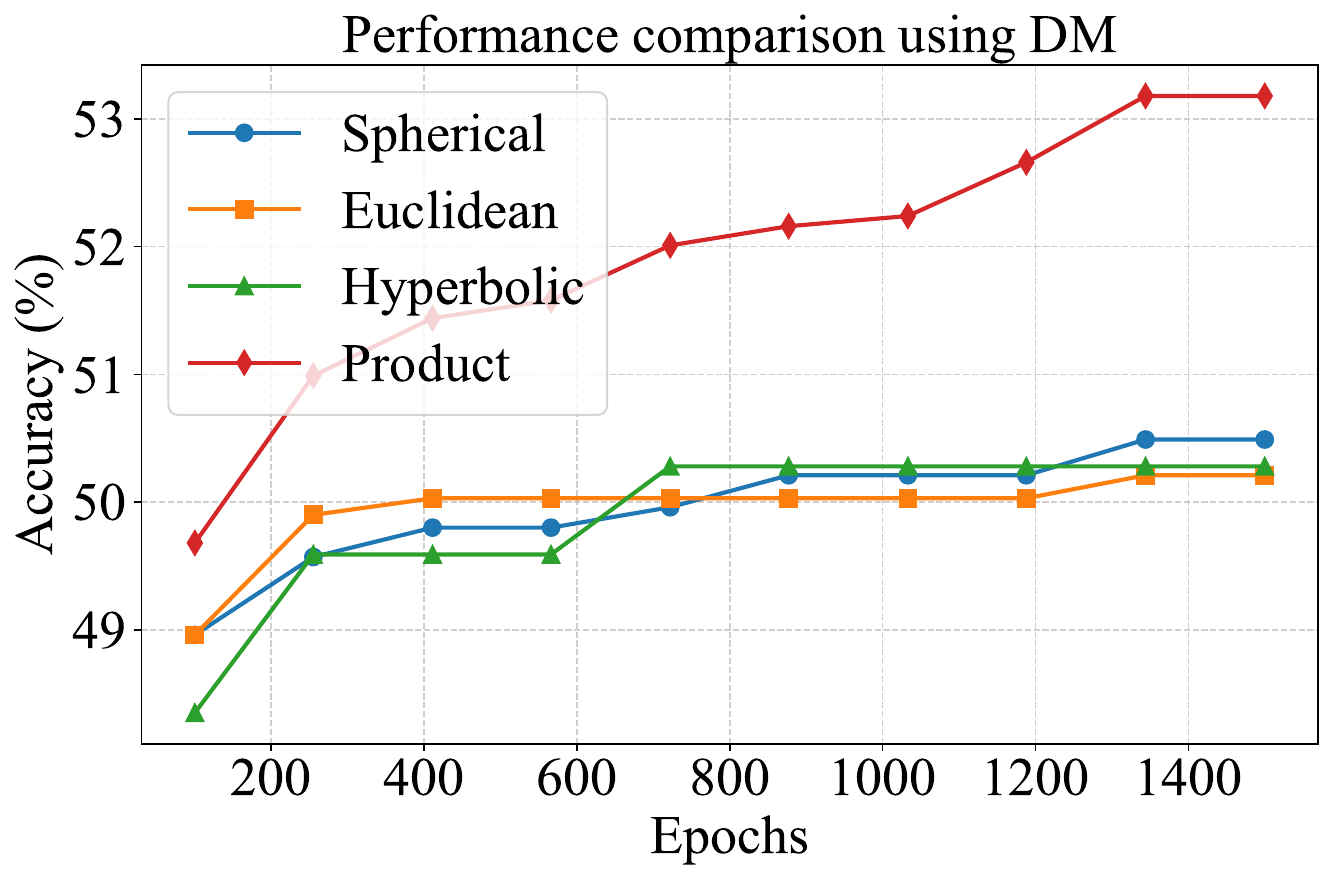}
    \end{minipage}
    \hfill
    \begin{minipage}[t]{0.47\textwidth}
        \centering
        \includegraphics[width=\textwidth]{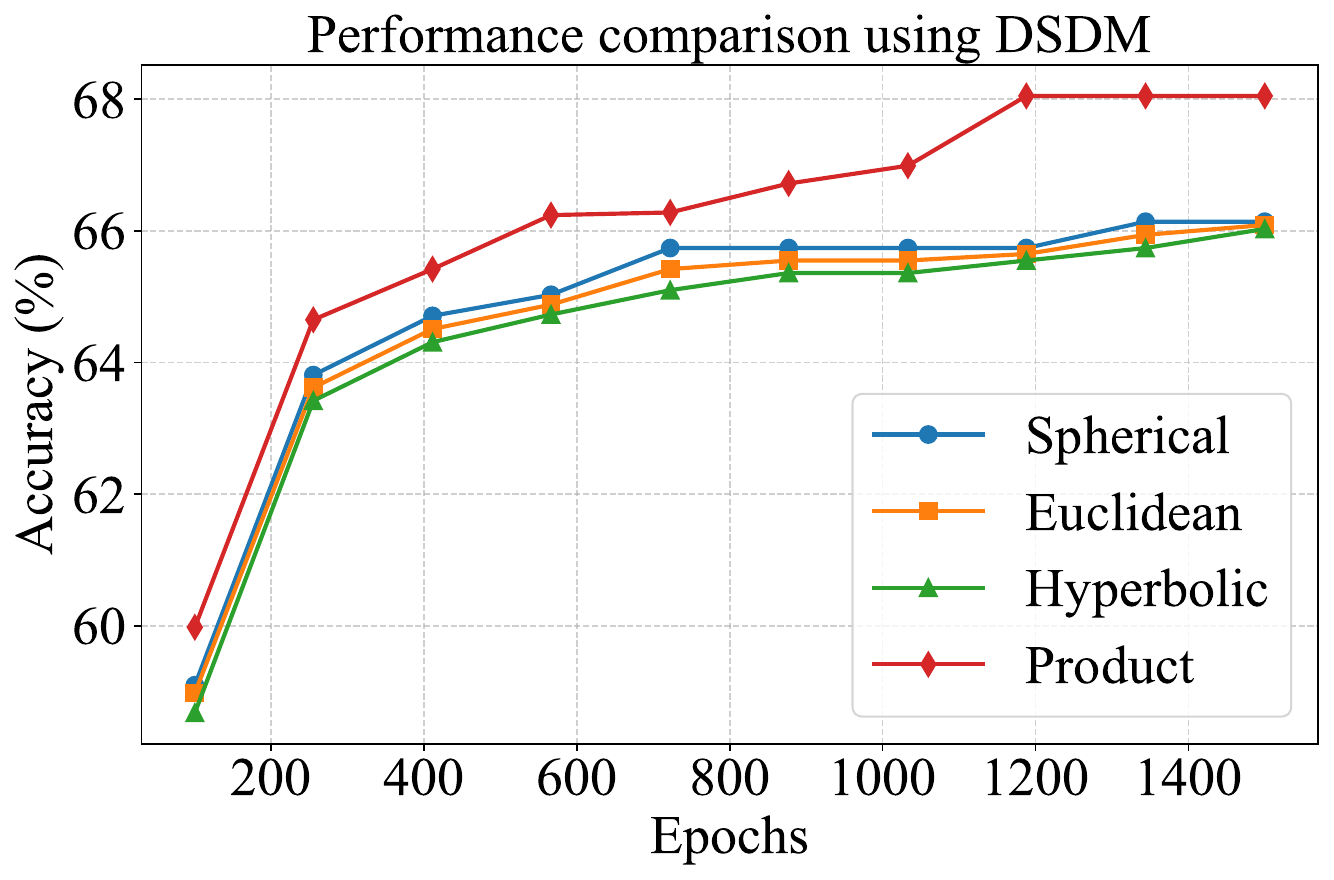}
    \end{minipage}
    \vspace{-0.5cm}
    \caption{Comparison between product spaces and single spaces on CIFAR-10 with IPC=10.}
    \label{fig:robust}
\end{figure}

\paragraph{Robustness.}
Figure~\ref{fig:robust} shows that our experimental results further validate the effectiveness of product-space modeling. Specifically, when replacing the distribution matching component with a simple DM objective \citep{zhao2023distribution}, our method achieves up to $2.79\%$ improvement over single-space baselines. Even when adopting a stronger DSDM formulation \citep{li2024dsdm}, the performance gain remains around $2\%$. These results highlight that the product space is capable of capturing richer geometric information in the features, enabling synthetic images to retain curvature properties that would otherwise be neglected. Beyond absolute accuracy gains, Moreover, the consistent advantages across different DM variants confirm the generality of our framework and echo our theoretical analysis, indicating that the benefit of product-space modeling is intrinsic to the alignment process rather than tied to a specific instantiation of distribution matching.

\begin{table}[t]
\small
\centering
\caption{Cross-architecture resuls on CIFAR10.}
\label{tab:cross}
\resizebox{0.8\linewidth}{!}{
\begin{tabular}{c|c|ccccc}
\toprule
\textbf{IPC} & \textbf{Evaluation} & \textbf{DREAM} & \textbf{M3D} & \textbf{DSDM} & \textbf{NCFM} & \textbf{Ours} \\
\midrule
\multirow{3}{*}{10} 
 & ConvNet-3 & 67.2$\pm$0.2 & 63.5$\pm$0.2 & 66.1$\pm$0.3 & 70.4$\pm$0.5 & \textbf{74.4$\pm$0.3}  \\
 & ResNet-18 & 64.7$\pm$0.2 & 29.9$\pm$0.3 & 51.7$\pm$0.2 & 67.3$\pm$0.4 &  \textbf{69.1$\pm$0.5 }\\
 & AlexNet & 65.1$\pm$0.3 & 34.5$\pm$0.1 & 47.7$\pm$0.5 & 67.4$\pm$0.5 & \textbf{69.0$\pm$0.3} \\
\midrule
\multirow{3}{*}{50} 
 & ConvNet-3 & 73.3$\pm$0.6 & 69.9$\pm$0.5 & 75.3$\pm$0.3 & 77.4$\pm$0.3 & \textbf{78.3$\pm$0.2}  \\
 & ResNet-18  & 69.3$\pm$0.4 & 31.6$\pm$0.9  & 61.4$\pm$0.3 & 73.7$\pm$0.2 &  \textbf{74.5$\pm$0.4}  \\
 & AlexNet  & 72.3$\pm$0.1 &  35.6$\pm$0.8 &  48.2$\pm$0.2 &  75.5$\pm$0.3 &  \textbf{75.6$\pm$0.1} \\
\bottomrule
\end{tabular}}
\end{table}

\paragraph{Cross-architecture}
Table~\ref{tab:cross} reports the cross-architecture evaluation results on CIFAR-10 with IPC values of 10 and 50. In this setting, we first distilled synthetic datasets using ConvNet-3 as the condensation model, and then trained two different architectures, ResNet-18 and AlexNet, on the distilled data. The performance was evaluated on the CIFAR-10 test set. Across both IPC settings, our method consistently achieves the best results when transferring to deeper or alternative architectures, demonstrating stronger generalization ability of the distilled data.

\begin{wraptable}{r}{0.45\textwidth}
  \centering
  \caption{Ablation study}
  \renewcommand{\arraystretch}{1.2} 
  \resizebox{\linewidth}{!}{ 
    \begin{tabular}{ccc|cc}
      \toprule
      \textbf{Product} & \textbf{Curv \& Weight} & \textbf{OT} & \textbf{CIFAR-10} \\
      \midrule
      & & & 71.8±0.3 \\
      \checkmark & & & 73.5±0.2 \\
      & & \checkmark & 72.3±0.1 \\
      \checkmark & \checkmark & & 73.9±0.2 \\
      \checkmark & & \checkmark & 73.8±0.1 \\
      \checkmark & \checkmark & \checkmark & \textbf{74.4±0.2} \\
      \bottomrule
    \end{tabular}
  }
  \label{tab:aba}
\end{wraptable}
\paragraph{Ablation study.} 
Table~\ref{tab:aba} presents the ablation study on CIFAR-10 with IPC\,=\,10. We first note that the product space alone already delivers a clear improvement ($73.5\%$ vs.\ $71.8\%$ for single-geometry modeling), confirming that combining multiple geometries inherently captures richer structural information than a flat Euclidean embedding. Adding curvature adaptation and learnable weights further improves performance by making the synthetic embeddings better follow the underlying data manifold, rather than being confined to fixed-curvature assumptions. Finally, incorporating the OT loss provides consistent transport between real and synthetic distributions while preventing dominance of any single geometry, resulting in the best performance of $74.4\%$ when all components are combined.
(refer to Appendix~\ref{sec:exp_more} for additional experiment).

\section{Conclusion}

In this work, we revisit dataset distillation from the perspective of manifold learning. Existing distribution-matching approaches largely operate in Euclidean feature spaces, which fail to capture the intrinsic geometry of real-world data. To address this, we propose GeoDM, a geometry-aware framework that performs matching in a Cartesian product of Euclidean, hyperbolic, and spherical manifolds. By modeling varying curvature, introducing learnable weights across geometries, and designing an optimal transport loss, our method achieves more faithful manifold-aware alignment. Moreover, our theoretical analysis shows that product-space matching provides tighter error bounds than Euclidean baselines. Extensive experiments on benchmark datasets further validate our approach and support the theoretical findings. A limitation of our work is that the dimensionality of each manifold is fixed; future research could explore incorporating learnable dimensions to better fuse multiple geometries without sacrificing performance.


\bibliography{iclr2026_conference}
\bibliographystyle{iclr2026_conference}

\appendix
\section{Proof of theory section}
\label{app:proof}

\subsection{proof of Geometry-driven risk decomposition Theorem~\ref{thm:risk-decomposition}}
\label{appsub:theorem1}

\begin{theorem}[Geometry-driven risk decomposition repeat]
\label{thm:risk-decomposition_rep}
Let $\mu$ be the real distribution and $\nu_M$ the distilled distribution of size $M$. The excess risk satisfies
\[
\Delta \;\triangleq\; \big|R(\theta^\nu;\mu)-R(\theta^\mu;\mu)\big|
\;\le\; 
L\Big(
   \beta(\kappa)\cdot \big[\overline{\mathcal E}_{\mathrm{stat}}(M) + \varepsilon_{\mathrm{opt}}\big]
   + C\,\varepsilon_{\mathrm{dist}}(\kappa)
\Big) \;+\; \varepsilon_{\mathrm{stab}},
\]
where 
$\overline{\mathcal E}_{\mathrm{stat}}(M)
:= \sum_y \mu_y \,\mathcal E_{\mathrm{stat}}^{(y)}(M)$
is the class-weighted average statistical error,
and $C$ is a universal constant depending only on the chosen IPM. (for $W_1$, one can take $C=1$).
\end{theorem}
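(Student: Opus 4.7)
My plan is to reduce the excess risk $\Delta$ to a distributional distance between the push-forward measures in the matching space, and then convert this distance into the statistical, optimization, geometric, and stability terms using the four parts of Assumption~\ref{assump:regularity}. First I would insert the auxiliary quantity $R(\theta^\nu;\nu_M)$ and write the triangle split
\[
\Delta \;\le\; \bigl|R(\theta^\nu;\mu) - R(\theta^\nu;\nu_M)\bigr| \;+\; \bigl|R(\theta^\nu;\nu_M) - R(\theta^\mu;\mu)\bigr|.
\]
The second term is a training-side comparison: $\theta^\nu$ and $\theta^\mu$ arise from the same algorithm $\mathcal A$ applied to two different input measures, so Assumption~\ref{assump:regularity}(iii) controls it directly by $\varepsilon_{\mathrm{stab}}$. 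The first term is a pure evaluation-shift, which I handle with the Lipschitz and IPM machinery.

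For the evaluation-shift term, I would use Assumption~\ref{assump:regularity}(ii) combined with Kantorovich--Rubinstein duality to obtain, after a tower decomposition over class labels,
\[
\bigl|R(\theta;\mu)-R(\theta;\nu_M)\bigr| \;\le\; L \sum_y \mu_y\, W_1^{(\mathcal X_\kappa)}\!\bigl(\Phi_{\kappa\#}\mu(\cdot\mid y),\,\Phi_{\kappa\#}\nu_M(\cdot\mid y)\bigr).
\]
The class-wise structure matches exactly the per-class hypothesis in Assumption~\ref{assump:regularity}(iv). I then replace the $1$-Wasserstein distance by the abstract IPM at the cost of the universal constant $C$ (for $W_1$ one can take $C=1$), and invoke Assumption~\ref{assump:regularity}(iv) to bound each class-conditional IPM by $\mathcal E_{\mathrm{stat}}^{(y)}(M)+\varepsilon_{\mathrm{opt}}$. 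Summing against the priors $\mu_y$ collapses the first factor into $\overline{\mathcal E}_{\mathrm{stat}}(M)+\varepsilon_{\mathrm{opt}}$.

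The geometric distortion enters when I translate the Lipschitz estimate against distances measured through $\Phi_\kappa$. The map is not an isometry, so I would use a bi-Lipschitz-type comparison of the form $d_{\mathcal X_\kappa}(\Phi_\kappa(x),\Phi_\kappa(x')) \le \beta(\kappa)\, d_\mathcal M(x,x') + \varepsilon_{\mathrm{dist}}(\kappa)$, which packages the two distortion parameters of the theorem. Transporting this inequality through the Wasserstein coupling inflates the IPM error bound by the multiplicative factor $\beta(\kappa)$ and adds $C\,\varepsilon_{\mathrm{dist}}(\kappa)$, independently of $M$. Combining with the stability residual from the triangle split yields exactly the announced bound.

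The step I expect to be the main obstacle is the careful bookkeeping at the interface between the manifold-distortion inequality and the class-conditional IPM bound: one must verify that $\beta(\kappa)$ acts only on the $[\overline{\mathcal E}_{\mathrm{stat}}+\varepsilon_{\mathrm{opt}}]$ piece and does not compound with $\varepsilon_{\mathrm{dist}}(\kappa)$ (which would weaken the bound), and that the class-conditional push-forwards integrate correctly against the joint law governing the risk. A second subtle point is that Assumption~\ref{assump:regularity}(iii) is stated as a uniform constant rather than a distance-dependent quantity, so I must be careful to place $\varepsilon_{\mathrm{stab}}$ as an additive residual without double-counting it inside the transport-based terms.
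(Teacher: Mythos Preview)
Your proposal is correct and mirrors the paper's proof: triangle split into a distribution-shift term and a stability term, Lipschitz/Kantorovich--Rubinstein duality applied class-wise, the distortion inequality $d_{\mathcal X_\kappa}(\Phi_\kappa(x),\Phi_\kappa(x'))\le\beta(\kappa)\,d_{\mathcal M}(x,x')+\varepsilon_{\mathrm{dist}}(\kappa)$ pushed through an optimal coupling, and the statistical bound from Assumption~\ref{assump:regularity}(iv). The bookkeeping point you flag is precisely how the paper resolves it: the distortion step is applied \emph{before} the statistical bound, converting $W_1^{(\mathcal X_\kappa)}$ on push-forwards into $\beta(\kappa)\,W_1^{(\mathcal M)}+\varepsilon_{\mathrm{dist}}(\kappa)$ on the manifold, and only then is (iv) invoked on $W_1^{(\mathcal M)}$---this ordering is what makes $\beta(\kappa)$ multiply only the $[\overline{\mathcal E}_{\mathrm{stat}}+\varepsilon_{\mathrm{opt}}]$ piece and keeps $\varepsilon_{\mathrm{dist}}(\kappa)$ purely additive.
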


\begin{proof}
\textbf{Step 0 (Notation and setup).}
Recall the population risk $R(\theta;\rho)=\mathbb E_{(x,y)\sim \rho}[\ell(f_\theta(x),y)]$, and let $\Delta := |R(\theta^\nu;\mu)-R(\theta^\mu;\mu)|$. Write $\Phi_\kappa:\mathcal M\to\mathcal X_\kappa$ for the representation map, and denote push-forward measures on $\mathcal X_\kappa$ by $\Phi_{\kappa\#}\rho$: $\Phi_{\kappa\#}\rho(B)=\rho(\Phi_\kappa^{-1}(B))$ for measurable $B\subseteq\mathcal X_\kappa$.

\paragraph{Step 1 (Triangle inequality).}
Add and subtract $R(\theta^\nu;\nu_M)$ and use the triangle inequality:
\begin{align}
\Delta
&= \big|R(\theta^\nu;\mu)-R(\theta^\mu;\mu)\big| \nonumber\\
&\le \underbrace{\big|R(\theta^\nu;\mu)-R(\theta^\nu;\nu_M)\big|}_{\text{(A) distribution shift under fixed }\theta^\nu}
\;+\;
\underbrace{\big|R(\theta^\nu;\nu_M)-R(\theta^\mu;\mu)\big|}_{\text{(B) algorithmic stability}} .
\label{eq:tri}
\end{align}

\textbf{Step 2 (Bounding term (A) via Lipschitz loss and $W_1$ on $\mathcal X_\kappa$).}
Fix $\theta=\theta^\nu$ and define $h(z,y):=\ell(f_\theta(z),y)$ for $z\in\mathcal X_\kappa$.
By Assumption~\ref{assump:regularity} (ii), for each $y$ the map $z\mapsto h(z,y)$ is $L$-Lipschitz:
\[
|h(z,y)-h(z',y)| \;\le\; L\, d_{\mathcal X_\kappa}(z,z'),
\quad \forall z,z'\in\mathcal X_\kappa.
\]
We assume label-prior matching, i.e., $\mu_y=\nu_y$, so that the label marginals coincide and only the conditionals $\mu(\cdot|y)$ and $\nu_M(\cdot|y)$ may differ. Then
\begin{align*}
\big|R(\theta^\nu;\mu)-R(\theta^\nu;\nu_M)\big|
&= \Big|\;\mathbb E_{y\sim \mu_y}\big[\,\mathbb E_{x\sim\mu(\cdot|y)} h(\Phi_\kappa(x),y) - \mathbb E_{x\sim\nu_M(\cdot|y)} h(\Phi_\kappa(x),y)\,\big]\;\Big| \\
&\le \mathbb E_{y\sim \mu_y}\Big|\,\mathbb E_{x\sim\mu(\cdot|y)} h(\Phi_\kappa(x),y) - \mathbb E_{x\sim\nu_M(\cdot|y)} h(\Phi_\kappa(x),y)\,\Big|.
\end{align*}
For each fixed $y$, by the Kantorovich--Rubinstein duality on $(\mathcal X_\kappa,d_{\mathcal X_\kappa})$~\citep{villani2008optimal},
\[
\Big|\,\mathbb E_{\mu(\cdot|y)} h(\Phi_\kappa(x),y) - \mathbb E_{\nu_M(\cdot|y)} h(\Phi_\kappa(x),y)\,\Big|
\;\le\;
L\, W_1^{(\mathcal X_\kappa)}\!\big(\Phi_{\kappa\#}\mu(\cdot|y),\Phi_{\kappa\#}\nu_M(\cdot|y)\big).
\]
Averaging over $y$ with weights $\mu_y=\nu_y$ yields the \emph{class-wise} bound
\begin{equation}
\label{eq:shift-W1X-cw}
\big|R(\theta^\nu;\mu)-R(\theta^\nu;\nu_M)\big|
\;\le\;
L \sum_{y}\mu_y\,
W_1^{(\mathcal X_\kappa)}\!\big(\Phi_{\kappa\#}\mu(\cdot|y),\Phi_{\kappa\#}\nu_M(\cdot|y)\big).
\end{equation}
We denote the right-hand side by
\[
\overline W_{1,y}^{(\mathcal X_\kappa)}(\mu,\nu_M)
\;:=\;
\sum_{y}\mu_y\,
W_1^{(\mathcal X_\kappa)}\!\big(\Phi_{\kappa\#}\mu(\cdot|y),\Phi_{\kappa\#}\nu_M(\cdot|y)\big),
\]
so that \eqref{eq:shift-W1X-cw} is equivalently
\(
\big|R(\theta^\nu;\mu)-R(\theta^\nu;\nu_M)\big|
\le L\,\overline W_{1,y}^{(\mathcal X_\kappa)}(\mu,\nu_M).
\)

\textbf{Step 3 (Relating class-wise $W_1$ on $\mathcal X_\kappa$ to $\mathcal M$ via geometric distortion).}
For each label $y$, use the \emph{primal} definition of $W_1$:
\[
W_1^{(\mathcal X_\kappa)}\!\big(\Phi_{\kappa\#}\mu(\cdot|y),\Phi_{\kappa\#}\nu_M(\cdot|y)\big)
= \inf_{\pi\in\Pi(\mu(\cdot|y),\nu_M(\cdot|y))} 
   \int d_{\mathcal X_\kappa}\!\big(\Phi_\kappa(x),\Phi_\kappa(x')\big)\, d\pi(x,x').
\]
By the geometric distortion inequality \eqref{eq:shift-W1X-cw}~\citep{tenenbaum2000global}, for any coupling $\pi$,
\[
d_{\mathcal X_\kappa}\!\big(\Phi_\kappa(x),\Phi_\kappa(x')\big)
\;\le\; \beta(\kappa)\, d_\mathcal M(x,x') + \varepsilon_{\mathrm{dist}}(\kappa).
\]
Integrating both sides w.r.t.\ $\pi$ and taking the infimum yields
\[
W_1^{(\mathcal X_\kappa)}\!\big(\Phi_{\kappa\#}\mu(\cdot|y),\Phi_{\kappa\#}\nu_M(\cdot|y)\big)
\;\le\;
\beta(\kappa)\,W_1^{(\mathcal M)}\!\big(\mu(\cdot|y),\nu_M(\cdot|y)\big)+\varepsilon_{\mathrm{dist}}(\kappa).
\]
Weighting by $\mu_y$ and summing over $y$ gives
\begin{equation}
\label{eq:A-final}
\big|R(\theta^\nu;\mu)-R(\theta^\nu;\nu_M)\big|
\;\le\;
L\Big(
   \beta(\kappa)\,\overline W_{1,y}^{(\mathcal M)}(\mu,\nu_M) + \varepsilon_{\mathrm{dist}}(\kappa)
\Big),
\end{equation}
where
\[
\overline W_{1,y}^{(\mathcal M)}(\mu,\nu_M)
:=\sum_y \mu_y\,W_1^{(\mathcal M)}\!\big(\mu(\cdot|y),\nu_M(\cdot|y)\big).
\]

\textbf{Step 4 (Bounding term (B) via uniform stability).}
By Assumption~\ref{assump:regularity} (iii), the training algorithm $\mathcal A$ is uniformly stable with constant $\varepsilon_{\mathrm{stab}}$ in the sense of \citet{bousquet2002stability}:
\[
\big|R(\theta^{\rho_1};\rho)-R(\theta^{\rho_2};\rho)\big| \;\le\; \varepsilon_{\mathrm{stab}},
\quad \forall \rho_1,\rho_2,\rho.
\]
Setting $\rho_1=\nu_M$, $\rho_2=\mu$, and evaluating on $\rho=\mu$ gives
\begin{equation}
\label{eq:B-final}
\big|R(\theta^\nu;\nu_M) - R(\theta^\mu;\mu)\big| \;\le\; \varepsilon_{\mathrm{stab}}.
\end{equation}

\textbf{Step 5 (Statistical convergence of the class-wise IPM on $\mathcal M$).}
By Assumption~\ref{assump:regularity} (iv), for each $y$ there exists a decreasing function $\mathcal E_{\mathrm{stat}}^{(y)}(M)$ and an optimization/modeling error $\varepsilon_{\mathrm{opt}}$ such that
\[
W_1^{(\mathcal M)}\!\big(\mu(\cdot|y),\nu_M(\cdot|y)\big) 
\;\le\; \mathcal E_{\mathrm{stat}}^{(y)}(M) + \varepsilon_{\mathrm{opt}}.
\]
Hence the class-weighted average satisfies
\begin{equation}
\label{eq:stat}
\overline W_{1,y}^{(\mathcal M)}(\mu,\nu_M)
\;\le\; \overline{\mathcal E}_{\mathrm{stat}}(M) + \varepsilon_{\mathrm{opt}},
\qquad 
\overline{\mathcal E}_{\mathrm{stat}}(M):=\sum_y \mu_y \mathcal E_{\mathrm{stat}}^{(y)}(M).
\end{equation}

\textbf{Step 6 (Combine all bounds).}
Combining \eqref{eq:A-final}, \eqref{eq:B-final}, and \eqref{eq:stat} yields
\[
\Delta \;\le\; 
L\Big(
   \beta(\kappa)\cdot\big[\overline{\mathcal E}_{\mathrm{stat}}(M) + \varepsilon_{\mathrm{opt}}\big]
   + \varepsilon_{\mathrm{dist}}(\kappa)
\Big) \;+\; \varepsilon_{\mathrm{stab}}.
\]
This matches the claimed inequality with $C=1$ for $W_1$.
For other IPMs (e.g., MMD), the same derivation holds with a geometry-dependent constant $C$ induced by the kernel's Lipschitz/smoothness envelope on $(\mathcal X_\kappa,d_{\mathcal X_\kappa})$~\citep{sriperumbudur2010hilbert}.

\end{proof}

\subsection{Detailed Proofs for Product-Space Advantage}
\label{app:product-proof}

\textbf{Product geometry and metrics.}
We equip product spaces with the $\ell_2$ product metric unless stated
otherwise. If $\mathcal X=\mathcal X_1\times\cdots\times\mathcal X_K$ and
$d_{\mathcal X}^2=\sum_{k=1}^K d_{\mathcal X_k}^2$, then for a representation
$\Phi=(\Phi^{(1)},\ldots,\Phi^{(K)})$ with component-wise distortions
\[
d_{\mathcal X_k}\big(\Phi^{(k)}(x),\Phi^{(k)}(x')\big)
\;\le\; \beta_k\, d_{\mathcal M}(x,x') + \varepsilon_k,
\]
we have (by Minkowski/triangle inequality)~\citep{yosida2012functional}
\begin{equation}
\label{eq:product-beta-eps}
d_{\mathcal X}\big(\Phi(x),\Phi(x')\big)
\;\le\;
\sqrt{\sum_{k=1}^K \big(\beta_k\, d_{\mathcal M}(x,x') + \varepsilon_k\big)^2}
\;\le\;
\Big(\max_k \beta_k\Big)\, d_{\mathcal M}(x,x') \;+\; \sum_{k=1}^K \varepsilon_k,
\end{equation}
so that we may take $\beta_{\mathrm{prod}}=\max_k \beta_k$ and
$\varepsilon_{\mathrm{dist,prod}}=\sum_k \varepsilon_k$.

\textbf{Two canonical $W_1$ representations.}
We will repeatedly use (i) the primal $W_1$ formulation via couplings:
\[
W_1(\alpha,\beta)=\inf_{\pi\in\Pi(\alpha,\beta)} \int d(u,v)\, d\pi(u,v),
\]
and (ii) the dual KR characterization
$W_1(\alpha,\beta)=\sup_{\|g\|_{\mathrm{Lip}}\le 1}
\big|\mathbb E_\alpha g - \mathbb E_\beta g\big|$
\citep{villani2008optimal}

\subsubsection{Hyperbolic factor: lower bounds on Euclidean distortion}

We first show that the hyperbolic component of the data manifold forces any Euclidean-only matching space to incur a multiplicative distortion that grows at least linearly in the geodesic radius $R$. The proof uses a standard chain: (a) large hyperbolic balls contain large tree-like subsets; (b) trees embed into hyperbolic space with $O(1)$ distortion; (c) finite trees require $\Omega(\log n)$ bi-Lipschitz distortion to embed into Euclidean/Hilbert spaces.

\begin{lemma}[Hyperbolic volume growth and large separated subsets]
\label{lem:hyp-packing}
Let $B_{\mathbb H}(R)$ be a geodesic ball of radius $R$ in $\mathbb H^{d_H}_{c_H}$ with $c_H<0$. Then there exist constants $C_1,\gamma>0$ (depending only on $d_H$ and $|c_H|$) and a $\Delta>0$ such that $B_{\mathbb H}(R)$ contains a $\Delta$ separated subset $\mathcal P_R$ with cardinality
\[
|\mathcal P_R| \;\ge\; C_1\, e^{\gamma R}.
\]
\end{lemma}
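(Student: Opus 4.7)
The plan is to combine the exponential volume growth of hyperbolic balls with a standard maximal-packing argument, using nothing beyond the explicit volume formula in constant negative curvature. First, I would fix a $\Delta>0$ independent of $R$ (to be tuned at the end) and take a maximal $\Delta$-separated subset $\mathcal{P}_R \subset B_{\mathbb H}(R)$. By maximality every point of $B_{\mathbb H}(R)$ lies within distance $\Delta$ of some $p\in\mathcal{P}_R$, so the closed balls of radius $\Delta$ around the points of $\mathcal{P}_R$ cover $B_{\mathbb H}(R)$. Since $\mathbb H^{d_H}_{c_H}$ is homogeneous, every ball of radius $\Delta$ has the same volume, and a union bound gives
\[
\mathrm{Vol}\bigl(B_{\mathbb H}(R)\bigr) \;\le\; |\mathcal{P}_R|\cdot \mathrm{Vol}\bigl(B_{\mathbb H}(\Delta)\bigr),
\]
which is the only volume inequality I need.

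Next, I would invoke the closed-form hyperbolic volume formula: writing $\kappa=\sqrt{|c_H|}$,
\[
\mathrm{Vol}\bigl(B_{\mathbb H}(r)\bigr)
\;=\; \omega_{d_H-1}\int_0^r \bigl(\tfrac{\sinh(\kappa t)}{\kappa}\bigr)^{d_H-1}\,dt,
\]
where $\omega_{d_H-1}$ is the surface area of the unit sphere $\mathbb S^{d_H-1}$ and $d_H\ge 2$. Using $\sinh(\kappa t) \ge \tfrac{1}{2} e^{\kappa t}$ for $t\ge 1/\kappa$ and integrating, a direct estimate yields
\[
\mathrm{Vol}\bigl(B_{\mathbb H}(R)\bigr) \;\ge\; c\, e^{(d_H-1)\kappa R}
\]
for all $R\ge 1/\kappa$, where $c>0$ depends only on $d_H$ and $\kappa$. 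Dividing by $\mathrm{Vol}\bigl(B_{\mathbb H}(\Delta)\bigr)$ then gives $|\mathcal{P}_R|\ge C_1 e^{\gamma R}$ with $\gamma=(d_H-1)\kappa$ and $C_1 = c/\mathrm{Vol}\bigl(B_{\mathbb H}(\Delta)\bigr)$. The small-$R$ regime, where the asymptotic estimate is not yet tight, is absorbed by shrinking $C_1$ once: for $R$ below a threshold the claimed bound is at most $1$, and the singleton $\mathcal{P}_R=\{0\}$ trivially satisfies it.

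The main obstacle, though mild, is choosing $\Delta$ so that the constants behave well. It must be large enough that the separation scale is meaningful for the downstream trees-in-hyperbolic-space embedding (so that the $\Delta$-separated set becomes a bona fide metric tree under the induced distances), yet small enough that $\mathrm{Vol}\bigl(B_{\mathbb H}(\Delta)\bigr)$ does not swallow the exponential factor. Any $\Delta$ of order $1/\kappa$ simultaneously meets both requirements, so $C_1,\gamma,\Delta$ can be taken to depend only on $d_H$ and $|c_H|$, as required. No Bishop--G\"unther comparison or tree-embedding machinery is needed inside the lemma itself; those enter only at the next step, when this packing is converted into the logarithmic Euclidean-distortion lower bound.
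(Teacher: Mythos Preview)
Your proof is correct and takes a genuinely different route from the paper. You use a \emph{volume}-based packing argument: a maximal $\Delta$-separated set in $B_{\mathbb H}(R)$ yields a covering by $\Delta$-balls, and homogeneity together with the explicit hyperbolic volume formula gives $|\mathcal P_R|\ge \mathrm{Vol}(B_{\mathbb H}(R))/\mathrm{Vol}(B_{\mathbb H}(\Delta))\ge C_1 e^{(d_H-1)\kappa R}$. The paper instead works on a single geodesic \emph{sphere} $S(R/2)\subset B_{\mathbb H}(R)$: it packs angular caps on $S(R/2)$, invokes the hyperbolic law of cosines to convert a $\theta_0$-angular separation into a $\Delta$-hyperbolic separation, and bounds the cap count by an area ratio, arriving at $\gamma=(d_H-1)/2$ at curvature $-1$. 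Your argument is shorter, avoids the law of cosines and cap-area estimates entirely, and actually delivers the sharper exponent $\gamma=(d_H-1)\kappa$; the paper loses a factor of two by restricting to $S(R/2)$ rather than exploiting the full ball. The paper's construction does place all packed points on a common sphere, but this feature is not used downstream, so nothing is lost by your simplification. One small slip: $\sinh(\kappa t)<\tfrac12 e^{\kappa t}$ strictly, so your stated inequality is false as written; replace it by $\sinh(\kappa t)\ge \tfrac12 e^{\kappa t-1}$ for $\kappa t\ge 1$ (as the paper does) or by $\sinh(\kappa t)\ge \tfrac14 e^{\kappa t}$ for $\kappa t\ge \tfrac12\ln 2$, and the rest of your argument goes through with only the constant $c$ adjusted.
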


\begin{proof}
\textbf{Step A.2.1 (Normalization and geodesic spheres).}
By a standard rescaling, $\mathbb H^{d_H}_{c_H}$ is isometric to $\mathbb H^{d_H}_{-1}$ with distances multiplied by $\sqrt{|c_H|}^{-1}$, and $(d_H\!-\!1)$-dimensional measures multiplied by $|c_H|^{-(d_H-1)/2}$ (see, e.g., \citet{ratcliffe2006foundations}). Therefore it suffices to prove the claim for curvature $-1$ and then rescale constants at the end. We henceforth work in $\mathbb H^{d}\equiv \mathbb H^{d_H}_{-1}$ and write $d$ for $d_H$.

Fix $R>0$ and set $r:=R/2$. Let $S(r)$ be the geodesic sphere of radius $r$ centered at some $o\in\mathbb H^d$, i.e., $ S(r)=\{x\in\mathbb H^d:\, d_{\mathbb H}(o,x)=r\}$. Clearly $S(r)\subset B_{\mathbb H}(R)$.

\textbf{Step A.2.2 (Area of geodesic spheres).}
The $(d\!-\!1)$-dimensional area of $S(r)$ equals
\[
\mathrm{Area}(S(r)) \;=\; \omega_{d-1}\, \sinh^{\,d-1}(r),
\]
where $\omega_{d-1}$ is the area of the unit sphere $\mathbb S^{d-1}$ \citep{ratcliffe2006foundations}. In particular,
$\mathrm{Area}(S(r))$ grows like $c_d\, e^{(d-1)r}$ as $r\to\infty$ because $\sinh r = \tfrac12(e^{r}-e^{-r}) \ge \tfrac12 e^{r-1}$ for $r\ge 1$.

\textbf{Step A.2.3 (From angular separation to hyperbolic separation).}
Take two points $u,v\in S(r)$ with central angle $\theta\in[0,\pi]$ (i.e., the angle at $o$ between geodesics $ou$ and $ov$ is $\theta$). By the hyperbolic law of cosines (curvature $-1$)~\citep{anderson2006hyperbolic},
\[
\cosh d_{\mathbb H}(u,v)
\;=\; \cosh^2 r \;-\; \sinh^2 r \,\cos \theta
\;=\; 1 \;+\; \sinh^2 r\,\big(1-\cos\theta\big).
\]
Hence
\[
\cosh d_{\mathbb H}(u,v) - 1
\;=\; \sinh^2 r\,\big(1-\cos\theta\big).
\]
Using the elementary inequality $\cosh t \ge 1 + \tfrac{t^2}{2}$ for all $t\in\mathbb R$, we obtain
\[
\frac{d_{\mathbb H}(u,v)^2}{2}
\;\le\;
\cosh d_{\mathbb H}(u,v) - 1
\;=\; \sinh^2 r\,\big(1-\cos\theta\big).
\]
Therefore
\begin{equation}
\label{eq:dh-lb-angle}
d_{\mathbb H}(u,v)
\;\ge\;
\sqrt{2}\,\sinh r\,\sqrt{1-\cos\theta}.
\end{equation}
Next, for $\theta\in[0,\pi]$, we have the elementary bound
$1-\cos\theta \;\ge\; \tfrac{2}{\pi^2}\,\theta^2$ (since $\cos\theta \le 1 - 2\theta^2/\pi^2$
on $[0,\pi]$). Plugging this into \eqref{eq:dh-lb-angle} yields
\begin{equation}
\label{eq:dh-lb-theta}
d_{\mathbb H}(u,v)
\;\ge\;
\frac{2}{\pi}\,\sinh r \,\theta.
\end{equation}
Consequently, if we enforce an \emph{angular} separation $\theta\ge \theta_0$ with
\begin{equation}
\label{eq:theta0-def}
\theta_0 \;:=\; \frac{\pi\,\Delta}{2\,\sinh r},
\end{equation}
then $d_{\mathbb H}(u,v)\ge \Delta$ for all such pairs on $S(r)$.
Thus, constructing a $\theta_0$-separated set on $S(r)$ immediately gives a $\Delta$-separated set in $B_{\mathbb H}(R)$.

\textbf{Step A.2.4 (Maximal angular packing on $S(r)$).}
Consider the intrinsic (Riemannian) metric on $S(r)$. A \emph{$\theta_0$-separated set} $\mathcal Q\subset S(r)$ means that the central angle between any two distinct points of $\mathcal Q$ is at least $\theta_0$. Using a greedy maximality argument on the compact manifold $S(r)$, there exists a maximal $\theta_0$-separated set $\mathcal Q$ such that the collection of spherical caps $\{\,\mathrm{Cap}(q,\theta_0/2):\, q\in\mathcal Q\,\}$ is pairwise disjoint, while the caps $\{\,\mathrm{Cap}(q,\theta_0):\, q\in\mathcal Q\,\}$ cover $S(r)$ \citep[see sphere packing arguments]{conway2013sphere}.Here, $\mathrm{Cap}(q,\varphi)$ denotes the subset of $S(r)$ with central angle at most $\varphi$ from $q$.

\textbf{Step A.2.5 (Area of spherical caps on $S(r)$).}
The induced metric on $S(r)$ is (up to isometry) the round metric on a sphere of ``radius'' $\sinh r$, i.e., the area element scales by $\sinh^{d-1} r$ times that of $\mathbb S^{d-1}$. Consequently, the area of a cap of angular radius $\varphi\in(0,\pi)$ on $S(r)$ is
\[
\mathrm{Area}\big(\mathrm{Cap}(\cdot,\varphi)\big)
\;=\;
\sinh^{d-1} r \;\cdot\; \omega_{d-2}\,\int_{0}^{\varphi} \sin^{d-2} t \; dt.
\]
Using $\sin t \le t$ for $t\in[0,\pi]$, we obtain the \emph{upper} bound
\begin{equation}
\label{eq:cap-area-upper}
\mathrm{Area}\big(\mathrm{Cap}(\cdot,\varphi)\big)
\;\le\;
\sinh^{d-1} r \;\cdot\; \omega_{d-2}\,\int_{0}^{\varphi} t^{d-2} \; dt
\;=\;
\sinh^{d-1} r \;\cdot\; \frac{\omega_{d-2}}{d-1}\,\varphi^{\,d-1}.
\end{equation}

\textbf{Step A.2.6 (Lower bound on the packing number on $S(r)$).}
Let $N:=|\mathcal Q|$ be the cardinality of a maximal $\theta_0$-separated set on $S(r)$. Because the caps $\{\mathrm{Cap}(q,\theta_0)\}_{q\in\mathcal Q}$ cover $S(r)$, we must have
\[
\mathrm{Area}\big(S(r)\big)
\;\le\;
\sum_{q\in\mathcal Q}
\mathrm{Area}\big(\mathrm{Cap}(q,\theta_0)\big)
\;=\;
N \cdot \mathrm{Area}\big(\mathrm{Cap}(\cdot,\theta_0)\big).
\]
Therefore,
\begin{equation}
\label{eq:N-lb-areas}
N
\;\ge\;
\frac{\mathrm{Area}\big(S(r)\big)}{\mathrm{Area}\big(\mathrm{Cap}(\cdot,\theta_0)\big)}.
\end{equation}
Substituting $\mathrm{Area}(S(r))=\omega_{d-1}\sinh^{d-1} r$ and using \eqref{eq:cap-area-upper} with $\varphi=\theta_0$, we obtain
\begin{align}
N
&\;\ge\;
\frac{\omega_{d-1}\sinh^{d-1} r}{\sinh^{d-1} r \cdot \frac{\omega_{d-2}}{d-1}\,\theta_0^{\,d-1}}
\;=\;
\Big(\frac{(d-1)\,\omega_{d-1}}{\omega_{d-2}}\Big)\, \theta_0^{-(d-1)}.
\label{eq:N-lb-theta0}
\end{align}
Hence the packing number on $S(r)$ at angular resolution $\theta_0$ satisfies
\[
N \;\ge\; C_d\, \theta_0^{-(d-1)},
\qquad
C_d := \frac{(d-1)\,\omega_{d-1}}{\omega_{d-2}}.
\]

\textbf{Step A.2.7 (From angular $\theta_0$ to metric $\Delta$ on $B_{\mathbb H}(R)$).}
By \eqref{eq:dh-lb-theta} and the choice \eqref{eq:theta0-def}, any $\theta_0$-separated set $\mathcal Q\subset S(r)$ is $\Delta$-separated in the ambient hyperbolic metric. Therefore, with $\theta_0 = \frac{\pi\Delta}{2\sinh r}$, inequality \eqref{eq:N-lb-theta0} gives
\[
|\mathcal P_R|
\;\ge\;
N
\;\ge\;
C_d \,\Big(\frac{2\,\sinh r}{\pi\,\Delta}\Big)^{d-1}.
\]

\textbf{Step A.2.8 (Exponential growth in $R$).}
For $r=R/2$ and $R\ge 2$, we have $\sinh r \ge \tfrac12 e^{r-1}$ and hence
\[
|\mathcal P_R|
\;\ge\;
\tilde C_d\, \Delta^{-(d-1)} \, e^{(d-1) r}
\;=\;
\tilde C_d\, \Delta^{-(d-1)} \, e^{\frac{(d-1)}{2}\, R},
\]
for some dimension-dependent constant $\tilde C_d>0$. Thus there exist constants $C_1,\gamma>0$ (depending only on $d$) and any fixed $\Delta\in(0,1]$ such that
\[
|\mathcal P_R| \;\ge\; C_1\, e^{\gamma R},
\qquad \gamma=\frac{d-1}{2}.
\]

\textbf{Step A.2.9 (Rescaling to curvature $c_H<0$).}
Returning to curvature $c_H<0$, distances scale by $|c_H|^{-1/2}$ and areas by $|c_H|^{-(d-1)/2}$. Choosing the same (absolute) $\Delta>0$ in the rescaled metric multiplies the angle threshold by a constant factor and therefore only alters $C_1$ and $\gamma$ by curvature-dependent constants. We may thus write the final bound with constants depending on $d_H$ and $|c_H|$:
\[
|\mathcal P_R| \;\ge\; C_1(d_H,|c_H|)\, e^{\gamma(d_H,|c_H|)\, R}.
\]

\textbf{Step A.2.10 (Small $R$).}
For bounded $R\in(0,2]$, the claim holds by decreasing $C_1$ so that $C_1 e^{\gamma R}\le 1 \le |\mathcal P_R|$; note that any ball contains at least one point, and the dependence on $R$ is monotone. 

\end{proof}
\begin{lemma}[Trees embed into hyperbolic space with constant distortion]
\label{lem:tree-to-hyp}
Let $T_n$ be any finite rooted $b$-ary tree of height $h$ and $n=\Theta(b^h)$ nodes with unit edge lengths (tree metric $d_T$). There exists an embedding $\iota:T_n\to \mathbb H^{d_H}_{c_H}$ such that for some constant $D_0\ge 1$ (depending only on $b$, $d_H$, and $|c_H|$, but not on $n$ or $h$),
\[
\frac{1}{D_0}\, d_T(u,v) \;\le\; d_{\mathbb H}\!\big(\iota(u),\iota(v)\big)
\;\le\; D_0\, d_T(u,v),\qquad \forall u,v\in T_n.
\]
\end{lemma}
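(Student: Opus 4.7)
The plan is to construct a Sarkar-style recursive embedding into the Poincar\'e ball model of $\mathbb H^{d_H}_{c_H}$, and then deduce the bi-Lipschitz bound via the quasi-geodesic (local-to-global) principle in Gromov hyperbolic spaces. I would work under the mild standing assumption $d_H \geq 2$, since a 1D hyperbolic line cannot accommodate branching of degree $b \geq 2$.

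First I would normalize the curvature to $-1$, rescaling all distances by $|c_H|^{-1/2}$ and absorbing the factor into $D_0$ at the end. Fix two construction parameters: an edge length $\tau > 0$ in hyperbolic space and a minimum angular separation $\theta_0 > 0$ depending only on $b$ (for instance $\theta_0 = 2\pi/(b+2)$). Place the root at the origin, and recursively embed each child $c$ of a node $p$ at hyperbolic distance $\tau$ from $\iota(p)$, choosing the unit direction at $\iota(p)$ so that (i) its angle with the direction back to $\iota(\text{parent}(p))$ is at least $\theta_0$, and (ii) any two sibling directions are mutually separated by at least $\theta_0$ on the unit tangent sphere at $\iota(p)$. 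Both constraints are simultaneously realizable whenever $d_H \geq 2$ and $b$ is bounded.

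The upper bound is immediate: the tree path of length $\ell = d_T(u,v)$ maps to a concatenation of $\ell$ hyperbolic geodesic segments of length $\tau$, so $d_{\mathbb H}(\iota(u),\iota(v)) \leq \tau\, d_T(u,v)$. For the lower bound, the embedded path is a piecewise geodesic whose every internal bend-angle is at least $\theta_0$ by construction (this covers up-up bends, down-down bends, and the single up-down bend at the lowest common ancestor). Applying the hyperbolic law of cosines to any two consecutive segments $x - y - z$ yields the local estimate
\[
\cosh d(x,z) \;\geq\; 1 + \sinh^2\tau \,(1-\cos\theta_0),
\]
hence $d(x,z) \geq 2\tau - C(\theta_0)$ for a constant $C(\theta_0)$ depending only on $\theta_0$. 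With $\tau$ chosen large enough relative to $C(\theta_0)$, this two-segment quasi-geodesic bound combined with the local-to-global principle for quasi-geodesics in $\delta$-hyperbolic spaces upgrades to a global linear lower bound
\[
d_{\mathbb H}(\iota(u),\iota(v)) \;\geq\; A\, d_T(u,v) - B,
\]
with $A>0$ and $B\geq 0$ depending only on $(b,\tau,\theta_0)$. Since $d_T(u,v)\geq 1$ for $u\neq v$, enlarging $D_0$ absorbs the additive $B$, and undoing the curvature rescaling multiplies $D_0$ by a factor depending only on $|c_H|$. Crucially, no constant depends on $n$ or $h$.

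The main obstacle is propagating the two-segment local quasi-geodesic estimate to arbitrary path lengths without letting the additive constant grow with the height $h$. The cleanest route is to cite the standard local-to-global principle for quasi-geodesics in Gromov hyperbolic spaces; a self-contained alternative is an induction on $d_T(u,v)$ that decomposes the path at the lowest common ancestor and applies the hyperbolic cosine rule to the resulting geodesic triangle, exploiting the exponential divergence of rays in negative curvature to ensure the accumulated deficit is $O(1)$ rather than eroding the linear slope.
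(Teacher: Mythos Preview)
Your proposal is correct and takes a genuinely different route from the paper. The paper uses a \emph{global radial--cone} construction: every vertex at depth $t$ is placed on the geodesic sphere of radius $Rt$ about a single center $o$, and each vertex at depth $\ell$ is allotted a spherical cone of angular radius $\phi_0 e^{-R\ell}$ that must contain its whole subtree. Both bounds then follow from a \emph{single} application of the hyperbolic law of cosines to the triangle $o,\iota(u),\iota(v)$: the central angle $\Theta(u,v)$ is pinned to $\Theta(e^{-R\ell})$ by the cone radii at the LCA depth $\ell$, and the identity $\cosh d_{\mathbb H}=\cosh r_s\cosh r_t-\sinh r_s\sinh r_t\cos\Theta$ yields $R\,d_T(u,v)-C_1\le d_{\mathbb H}\le R\,d_T(u,v)+C_2$ directly (the upper bound via a radial--arc--radial broken path). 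Your Sarkar-style construction is instead \emph{local} --- each child placed relative to its parent --- and the global lower bound is extracted from the embedded tree path viewed as a piecewise geodesic, invoking the local-to-global principle for quasi-geodesics in $\delta$-hyperbolic spaces. Your route is more conceptual and generalizes readily (variable degree, coarser target spaces), but it leans on a black-box whose self-contained proof is about as long as the paper's entire argument; the paper's route avoids that citation at the price of the cone-packing feasibility check \eqref{eq:R-choice}. Both yield constants independent of $n,h$; your observation that $d_H\ge 2$ is implicitly needed is a point the paper leaves unstated.
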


\begin{proof}
\textbf{Step A.3.0 (Normalization of curvature).}
It suffices to construct $\iota$ into $\mathbb H^{d}_{-1}$ (curvature $-1$) with bi-Lipschitz distortion $D_0'(b,d)$. The general case follows by the standard rescaling isometry between $\mathbb H^{d_H}_{c_H}$ and $\mathbb H^{d_H}_{-1}$, which multiplies distances by $|c_H|^{-1/2}$ and alters only the constant by a factor depending on $|c_H|$ \citep{ratcliffe2006foundations}.

\textbf{Step A.3.1 (Radial--angular embedding scheme).}
Fix a scale parameter $R>0$ to be chosen later (depending only on $b,d$), and fix a root $o\in\mathbb H^d$. Each vertex at depth $t$ is embedded at radial distance $r(t):=Rt$ from $o$.  
To separate subtrees, assign to each vertex $v$ at depth $\ell$ a spherical cone $\mathsf{cone}(v)\subset\mathbb S^{d-1}$ of angular radius
\[
\phi(\ell) \;=\; \phi_0\, e^{-R\ell},
\]
for some $\phi_0>0$ independent of $n,h$.  
Children of $v$ receive disjoint sub-cones inside $\mathsf{cone}(v)$, each of radius at most $\phi(\ell{+}1)$. Such an allocation is possible provided
\[
b \cdot \phi(\ell{+}1)^{d-1} \;\le\; \tfrac12 \phi(\ell)^{d-1},
\]
which (by substitution) holds whenever
\begin{equation}
\label{eq:R-choice}
R \;\ge\; \tfrac{1}{d-1}\,\log(2b).
\end{equation}
This ensures both separation and angular slack at every level (cf. spherical cap packing arguments \citep{conway2013sphere}).

\textbf{Step A.3.2 (Angular separation across subtrees).}
Let $u,v$ be two vertices with lowest common ancestor $w$ at depth $\ell$. Then the cones of different children of $w$ are disjoint, and hence
\begin{equation}
\label{eq:theta-bounds}
c_1\, e^{-R\ell} \;\le\; \Theta(u,v) \;\le\; c_2\, e^{-R\ell},
\end{equation}
for constants $c_1,c_2>0$ depending only on $(b,d)$. The lower bound comes from the enforced gap between sibling cones; the upper bound holds because each cone has radius $O(e^{-R\ell})$.

\textbf{Step A.3.3 (Hyperbolic law of cosines).}
Write $s=\mathrm{depth}(u)$, $t=\mathrm{depth}(v)$, $r_s=Rs$, $r_t=Rt$, and $\ell=\mathrm{depth}(w)$. The hyperbolic law of cosines (curvature $-1$) gives
\begin{equation}
\label{eq:law-cos}
\cosh d_{\mathbb H}(\iota(u),\iota(v))
= \cosh r_s\,\cosh r_t - \sinh r_s\,\sinh r_t \cos \Theta(u,v),
\end{equation}
see, e.g., \citep{anderson2006hyperbolic,ratcliffe2006foundations}.

\textbf{Step A.3.4 (Lower bound).}
Using $\cosh x\ge \tfrac12 e^x$ and $\sinh x\le \tfrac12 e^x$, we deduce
\[
\cosh d_{\mathbb H}(\iota(u),\iota(v))
\;\ge\; \tfrac14 e^{R(s+t)}\,(1-\cos\Theta(u,v)).
\]
By $1-\cos\theta \ge \tfrac{\theta^2}{4}$ and \eqref{eq:theta-bounds},
\[
\cosh d_{\mathbb H}(\iota(u),\iota(v)) \;\ge\; c\, \exp\!\big(R(s+t-2\ell)\big).
\]
Taking $\cosh^{-1}$and using $\cosh^{-1}(y)\ge \log y$ for $y\ge 1$, we obtain
\begin{equation}
\label{eq:lower-qi}
d_{\mathbb H}(\iota(u),\iota(v)) \;\ge\; R\,d_T(u,v) - C_1,
\end{equation}
with $C_1=C_1(b,d)$ independent of $n,h$.

\textbf{Step A.3.5 (Upper bound).}
Construct a broken path: radially from $\iota(u)$ to $S(R\ell)$, then along $S(R\ell)$ by arc length $\sinh(R\ell)\Theta(u,v)$, then radially to $\iota(v)$.  
Thus
\[
d_{\mathbb H}(\iota(u),\iota(v)) \;\le\; R(s-\ell) + R(t-\ell) + \sinh(R\ell)\,\Theta(u,v).
\]
By \eqref{eq:theta-bounds}, $\sinh(R\ell)\Theta(u,v)\le \tfrac12 e^{R\ell}\cdot c_2 e^{-R\ell}=O(1)$. Hence
\begin{equation}
\label{eq:upper-qi}
d_{\mathbb H}(\iota(u),\iota(v)) \;\le\; R\,d_T(u,v) + C_2.
\end{equation}

\textbf{Step A.3.6 (Bi-Lipschitz conclusion).}
Combining \eqref{eq:lower-qi}--\eqref{eq:upper-qi} gives
\[
R\,d_T(u,v)-C_1 \;\le\; d_{\mathbb H}(\iota(u),\iota(v)) \;\le\; R\,d_T(u,v)+C_2.
\]
Dividing by $d_T(u,v)\ge 1$ and adjusting constants yields
\[
\frac{1}{D_0'}\, d_T(u,v) \;\le\; d_{\mathbb H}(\iota(u),\iota(v)) \;\le\; D_0'\, d_T(u,v),
\]
for $D_0'=D_0'(b,d)<\infty$ independent of $n,h$.

\textbf{Step A.3.7 (Curvature rescaling).}
Rescaling back to curvature $c_H<0$ multiplies distances by $|c_H|^{-1/2}$, which only affects constants. Absorbing this factor into $D_0$ completes the proof.
\end{proof}

\begin{lemma}[Lower bounds for embedding trees into Euclidean/Hilbert]
\label{lem:tree-to-eu-lb}
Let $T_n$ be the vertex set of a rooted $b$-ary tree of height $h$ with unit edge lengths, so $n=\Theta(b^h)$.
For any embedding $f:T_n\to \ell_2^m$, the bi-Lipschitz distortion satisfies
\[
\mathrm{dist}(f) \;\triangleq\; \Big(\sup_{u\neq v}\frac{\|f(u)-f(v)\|_2}{d_T(u,v)}\Big)\cdot 
\Big(\sup_{u\neq v}\frac{d_T(u,v)}{\|f(u)-f(v)\|_2}\Big)
\;\ge\; c\,\sqrt{h}
\;=\; c'\,\sqrt{\log n},
\]
for universal constants $c,c'>0$ independent of $n,h$.
\end{lemma}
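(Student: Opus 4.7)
The plan is to establish this via a Bourgain-style Poincar\'e inequality for the complete binary tree in Hilbert space, leveraging the hierarchical branching structure to amplify local Lipschitz constraints into a global distortion lower bound. First I would reduce to the binary case $b=2$: selecting two children at each internal vertex produces a binary subtree $T_n'\subseteq T_n$ that inherits the tree metric exactly (paths internal to $T_n'$ remain geodesics in $T_n$), so any distortion lower bound on $T_n'$ transfers to $T_n$. From this point on I treat $T_n$ as the complete binary tree of depth $h$.

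The central tool is the short-diagonals (cotype-2) inequality in Hilbert space: for any four points $a,b,c,d\in\ell_2^m$,
\[
\|a-c\|^2 + \|b-d\|^2 \;\le\; \|a-b\|^2 + \|b-c\|^2 + \|c-d\|^2 + \|a-d\|^2,
\]
which is equivalent to Ball's Markov type-2 estimate $\E\|f(X_t)-f(X_0)\|^2 \le t\,\E\|f(X_1)-f(X_0)\|^2$ along reversible chains on $V(T_n)$. I would apply this hierarchically to leaf pairs organized by the depth of their lowest common ancestor. Normalizing so that $\|f(x)-f(y)\| \ge d_T(x,y)$ for all pairs (so $\mathrm{dist}(f)$ equals the Lipschitz constant $L$), two independent uniform leaves $u,v$ satisfy $\E[d_T(u,v)^2]=\Theta(h^2)$, which forces
\[
\E\|f(u)-f(v)\|^2 \;\ge\; \Omega(h^2).
\]
A weighted Poincar\'e inequality on $T_h$, obtained by telescoping short-diagonals along root-to-leaf paths, yields the complementary upper bound $\E\|f(u)-f(v)\|^2 \le C\,h\,L^2$. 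Combining the two gives $L^2 \ge \Omega(h)$, hence $\mathrm{dist}(f) \ge c\sqrt{h}$; because $n=\Theta(b^h)$ with $b$ constant, this is equivalent to $c'\sqrt{\log n}$.

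The principal obstacle is obtaining the Poincar\'e constant $O(h)$ rather than the naive $O(n)=O(2^h)$ predicted by the tree's spectral gap, which is exponentially small and useless here. Bourgain's resolution is a carefully weighted telescoping in which each leaf pair with LCA at depth $k$ is weighted proportionally to $2^{-k}$, so that each edge at depth $k$ is charged only $\Theta(h-k)$ times in total rather than $\Theta(n)$ times. Verifying this charging accounting and summing the resulting geometric series is the technical heart of the argument; the remainder is routine bookkeeping. Once the weighted Poincar\'e estimate is in hand, the discrepancy between the distortion-based lower bound $\Omega(h^2/D^2)$ and the structural upper bound $O(h)$ on $\E\|f(u)-f(v)\|^2$ immediately yields the claimed bi-Lipschitz lower bound; the dimension $m$ plays no role in the argument, so the bound is dimension-free.
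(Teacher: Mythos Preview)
Your proposal is correct and follows essentially the same skeleton as the paper: both obtain the lower bound by sandwiching $\mathbb{E}_{u,v\in L}\|f(u)-f(v)\|^2$ between an $\Omega(h^2/\alpha^2)$ lower bound coming from the co-Lipschitz constant and expected leaf-pair tree distance, and an $O(h\beta^2)$ upper bound coming from a tree Poincar\'e inequality, then read off $\beta\alpha\ge c\sqrt{h}$.

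The framing differs in a useful way. The paper cites the Poincar\'e inequality as a black box from \citet{linial1995geometry} and \citet{gupta2004cuts}, working with an edge-energy formulation $\sum_e\|a_e\|^2$; you instead propose to \emph{derive} the $O(h)$ constant by Bourgain's hierarchical short-diagonals/charging argument, correctly identifying that the naive spectral-gap route gives only $O(2^h)$ and that a level-aware weighting is essential. Your reduction to $b=2$ is a clean preliminary step the paper omits. One small caution: your parenthetical linking the short-diagonals inequality to Ball's Markov type~2 is a red herring here---Markov type controls \emph{upper} bounds on distortion for random walks, not the lower bound you need, and the tree argument does not actually pass through it. The short-diagonals/parallelogram route (or equivalently the level-weighted Cauchy--Schwarz/energy bound) is the right tool, and once you carry out the charging you describe, the rest is exactly as you say.
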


\begin{proof}
We follow a standard Poincar\'e/energy argument for trees in Hilbert spaces, as developed in \citet{linial1995geometry} and refined in \citet{gupta2004cuts}.
Let $T$ denote a rooted $b$-ary tree of height $h$ with unit edge lengths, and let $V$, $E$, and $L$ be its sets of vertices, edges, and leaves, respectively. Then $|L|=\Theta(b^h)$ and $|E|=\Theta(|V|)=\Theta(n)$.

\textbf{Step A.4.1: Distortion, Lipschitz and co-Lipschitz constants.}
For $f:V\to \ell_2^m$, write
\[
\beta \;\triangleq\; \sup_{u\ne v}\frac{\|f(u)-f(v)\|_2}{d_T(u,v)}
\qquad\text{and}\qquad
\alpha \;\triangleq\; \sup_{u\ne v}\frac{d_T(u,v)}{\|f(u)-f(v)\|_2}.
\]
Then the distortion is $\mathrm{dist}(f)=\beta\cdot \alpha$. Note that scaling $f$ by a positive constant scales $\beta$ and $\alpha$ inversely, leaving the product unchanged.

\textbf{Step A.4.2: Edge energy notation.}
Root the tree at $o$. For each edge $e=(x,\mathrm{parent}(x))\in E$ define the edge increment $a_e \;\triangleq\; f(x)-f(\mathrm{parent}(x)) \in \ell_2^m$. Then for any $v\in V$, if $P(o\to v)$ is the unique root-to-$v$ path,
\[
f(v)-f(o)\;=\;\sum_{e\in P(o\to v)} a_e
\qquad\text{(telescoping along the path).}
\]
Let the (Hilbert) edge energy be 
\[
\mathcal E(f)\;\triangleq\;\sum_{e\in E}\|a_e\|_2^2.
\]
By the definition of $\beta$ and unit edge lengths, $\|a_e\|\le \beta$ for all $e$, hence
\begin{equation}
\label{eq:edge-energy-beta}
\mathcal E(f)\;=\;\sum_{e\in E}\|a_e\|^2 \;\le\; |E|\cdot \beta^2.
\end{equation}

\textbf{Step A.4.3: A tree Poincar\'e inequality for leaves (LLR).}
Consider the multiset of ordered leaf pairs $L\times L$. Then there exists a universal constant $C>0$ such that for every mapping $f:V\to \ell_2$,
\begin{equation}
\label{eq:LLR-poincare}
\frac{1}{|L|^2}\sum_{u,v\in L}\|f(u)-f(v)\|_2^2
\;\le\;
C\, h \cdot \frac{1}{|E|}\sum_{e\in E}\|a_e\|_2^2.
\end{equation}
Inequality \eqref{eq:LLR-poincare} is a special case of the Poincar\'e type bounds in \citet{linial1995geometry} (see also \citet{gupta2004cuts} for related formulations), which control averaged pairwise squared distances on $L$ by the edge energy times the tree height.\footnote{Intuitively, $\|f(u)-f(v)\|^2$ can be expanded in terms of the increments $a_e$ along the two root-to-leaf paths, and averaging over all leaves makes cross-terms cancel while each level contributes additively; the factor $h$ accounts for the $h$ possible levels at which the two paths can diverge.}

\textbf{Step A.4.4: Lower bound the LHS via co-Lipschitzness and leaf distances.}
For leaves $u,v$, their tree distance is $d_T(u,v)=2(h-\mathrm{depth}(\mathrm{LCA}(u,v)))$. If $U,V$ are drawn independently and uniformly from $L$, then $\mathbb E[d_T(U,V)]\ge c_0 h$ for some $c_0=c_0(b)>0$ (indeed, the LCA has bounded average depth). By Jensen's inequality, $\mathbb E[d_T(U,V)^2] \ge (\mathbb E[d_T(U,V)])^2 \ge c_0^2 h^2$. Using the co-Lipschitz property of $f$,
\[
\|f(u)-f(v)\|_2 \;\ge\; \frac{1}{\alpha}\, d_T(u,v) \quad\Rightarrow\quad
\|f(u)-f(v)\|_2^2 \;\ge\; \frac{1}{\alpha^2}\, d_T(u,v)^2.
\]
Averaging over $u,v\in L$ yields
\begin{equation}
\label{eq:LHS-lb}
\frac{1}{|L|^2}\sum_{u,v\in L}\|f(u)-f(v)\|_2^2
\;\ge\;
\frac{1}{\alpha^2}\cdot \frac{1}{|L|^2}\sum_{u,v\in L} d_T(u,v)^2
\;\ge\;
\frac{c_0^2}{\alpha^2}\, h^2.
\end{equation}

\textbf{Step A.4.5: Upper bound the RHS via the edge energy and $\beta$.}
By \eqref{eq:edge-energy-beta}, the RHS of \eqref{eq:LLR-poincare} satisfies
\begin{equation}
\label{eq:RHS-ub}
C\, h \cdot \frac{1}{|E|}\sum_{e\in E}\|a_e\|_2^2
\;\le\; C\, h \cdot \frac{1}{|E|}\cdot |E|\cdot \beta^2
\;=\; C\, h\, \beta^2.
\end{equation}
\textbf{Step A.4.6: Combine the inequalities.}
From \eqref{eq:LHS-lb} and \eqref{eq:RHS-ub}, plugging into the Poincar\'e inequality \eqref{eq:LLR-poincare}, we obtain
\[
\frac{c_0^2}{\alpha^2}\,h^2 \;\le\; C\,h\,\beta^2.
\]
Multiplying both sides by $\alpha^2$ yields
\[
c_0^2 h^2 \;\le\; C\,h\,\beta^2\alpha^2.
\]
Dividing by $h$ and taking square roots gives
\[
\beta\alpha \;\ge\; \frac{c_0}{\sqrt{C}}\,\sqrt{h}.
\]
Therefore, the distortion satisfies
\[
\mathrm{dist}(f)=\beta\alpha \;\ge\; c\,\sqrt{h},
\]
for some universal constant $c>0$ independent of $n,h$.

\textbf{Step A.4.7: Express the bound in terms of $n$.}
Since $n=\Theta(b^h)$, we have $h=\Theta(\log n)$. Consequently,
\[
\mathrm{dist}(f)\;\ge\; c'\,\sqrt{\log n},
\]
for a universal constant $c'>0$.

\end{proof}

We now propagate these bounds from trees to hyperbolic balls.
\begin{lemma}[Euclidean Lipschitz factor lower bounds on $B_{\mathbb H}(R)$]
\label{lem:beta-eu-lb-refined}
Fix $R>0$ and consider the hyperbolic ball $B_{\mathbb H}(R)\subset \mathbb H^{d_H}_{c_H}$. 
Let $f:B_{\mathbb H}(R)\to \ell_2^m$ be any mapping. 
There exist constants $C_1,\gamma>0$ depending only on $d_H,|c_H|$ such that:

\textbf{(A) Uniform separation (strong non-collapse).}
Assume there exists $S_R\subset B_{\mathbb H}(R)$ with $|S_R|\ge C_1 e^{\gamma R}$ 
such that $\|f(u)-f(v)\|\ge \Delta$ for all distinct $u,v\in S_R$, 
where $\Delta>0$ is independent of $R$. 
Then for sufficiently large $R$,
\[
\beta_{\mathrm{euc}} \;\ge\; c_A\, \frac{e^{\gamma R/m}}{R},
\]
for some constant $c_A=c_A(d_H,|c_H|,m,\Delta)>0$. 
In particular, since the exponential term dominates, there exists $c'_A>0$ such that 
$\beta_{\mathrm{euc}} \ge c'_A R$ for all sufficiently large $R$.

\textbf{(B) Global co-Lipschitz on a tree (moderate non-collapse).}
Assume there exists a rooted $b$-ary tree $T_n$ of height $h=\Theta(R)$ with $n=\Theta(e^{\gamma R})$,
together with an embedding $\iota:T_n\to B_{\mathbb H}(R)$ of constant distortion 
(Lemma~\ref{lem:tree-to-hyp}), such that $f$ is \emph{co-Lipschitz} on $\iota(T_n)$ 
with constant $\alpha_0<\infty$, i.e.,
$\|f(\iota(u)) - f(\iota(v))\| \ge \alpha_0^{-1} d_T(u,v)$ for all $u,v\in T_n$.
Then
\[
\beta_{\mathrm{euc}} \;\ge\; c_B\, \alpha_0^{-1}\, \sqrt{R},
\]
for some $c_B>0$ depending only on $d_H,|c_H|$.
\end{lemma}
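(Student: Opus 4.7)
The plan is to prove the two parts by distinct geometric obstructions. For Part (A), I would exploit the exponential volume growth guaranteed by Lemma~\ref{lem:hyp-packing} against the polynomial packing capacity of $\ell_2^m$. For Part (B), I would reduce to the tree-to-Hilbert distortion lower bound of Lemma~\ref{lem:tree-to-eu-lb} by composing $f$ with the constant-distortion hyperbolic tree embedding of Lemma~\ref{lem:tree-to-hyp}.

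For Part (A), the first step is to observe that the image $f(B_{\mathbb H}(R))$ is contained in some Euclidean ball of radius at most $2R\,\beta_{\mathrm{euc}}$, since $\beta_{\mathrm{euc}}$ is the Lipschitz constant of $f$ and $\mathrm{diam}(B_{\mathbb H}(R))\le 2R$. The hypothesis supplies a $\Delta$-separated set $f(S_R)$ of cardinality at least $C_1 e^{\gamma R}$ inside this ball. A standard volume ratio bound in $\ell_2^m$ caps the size of any such $\Delta$-separated configuration at $(C_m R\,\beta_{\mathrm{euc}}/\Delta)^m$. Setting the two estimates against each other and solving for $\beta_{\mathrm{euc}}$ yields the stated rate $\beta_{\mathrm{euc}} \ge c_A\, e^{\gamma R/m}/R$. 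The corollary $\beta_{\mathrm{euc}} \ge c'_A R$ for sufficiently large $R$ then follows because $e^{\gamma R/m}$ grows super-polynomially and eventually dominates $R^2$.

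For Part (B), I would set $g := f\circ \iota$, where $\iota:T_n\to B_{\mathbb H}(R)$ is the constant-distortion tree embedding of Lemma~\ref{lem:tree-to-hyp} with height $h=\Theta(R)$ and $n=\Theta(e^{\gamma R})$. Combining $\iota$'s upper Lipschitz bound with the Lipschitz constant of $f$ shows that $g$ is $(\beta_{\mathrm{euc}} D_0)$-Lipschitz on $(T_n,d_T)$, while the co-Lipschitz hypothesis on $f|_{\iota(T_n)}$ translates into an $\alpha_0^{-1}$ co-Lipschitz bound for $g$ directly on the tree metric. Hence $\mathrm{dist}(g)\le \beta_{\mathrm{euc}}\,\alpha_0\,D_0$. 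Invoking Lemma~\ref{lem:tree-to-eu-lb} gives the matching lower bound $\mathrm{dist}(g)\ge c'\sqrt{\log n}=\Theta(\sqrt{R})$, and rearranging produces $\beta_{\mathrm{euc}}\ge c_B\,\alpha_0^{-1}\sqrt{R}$ after absorbing $D_0$ into $c_B$.

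The main obstacle will be bookkeeping in Part (A): I must carefully arrange the $\ell_2^m$ volume-ratio inequality so that the full $e^{\gamma R}$ factor survives as $e^{\gamma R/m}$ on the right-hand side after the $m$-th-root extraction, and then cleanly propagate the constants $(d_H,|c_H|,m,\Delta)$ into the final $c_A$ without hidden dependence on $R$. Part (B) is then essentially a one-line distortion composition once the two supporting lemmas are invoked, and the passage from the exponential bound to the polynomial statement $\beta_{\mathrm{euc}}\ge c'_A R$ is a routine asymptotic comparison with a threshold choice for $R$.
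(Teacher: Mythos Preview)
Your proposal is correct and follows essentially the same route as the paper: Part (A) compares the exponential cardinality of the $\Delta$-separated set against the polynomial $\ell_2^m$ packing bound for a ball of radius $O(\beta_{\mathrm{euc}} R)$ and solves for $\beta_{\mathrm{euc}}$, while Part (B) composes $f$ with the constant-distortion tree embedding and applies the tree-into-Hilbert lower bound of Lemma~\ref{lem:tree-to-eu-lb}. The only cosmetic differences are that the paper bounds the image via the radius from the center (giving $\beta_{\mathrm{euc}} R$) rather than the diameter $2R$, and that it phrases Part (B) by directly lower-bounding the upper Lipschitz constant $\beta_F$ of $F=f\circ\iota$ instead of the full distortion product---both purely notational choices absorbed into the constants.
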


\begin{proof}
We prove (A) and (B) separately. Throughout, $\beta_{\mathrm{euc}}$ denotes the upper Lipschitz
constant of $f$ with respect to the hyperbolic distance on $B_{\mathbb H}(R)$:
$\|f(x)-f(y)\|\le \beta_{\mathrm{euc}}\, d_{\mathbb H}(x,y)$.

\textbf{Proof of (A): Euclidean packing vs.\ hyperbolic growth.}
Let $o$ be the center of $B_{\mathbb H}(R)$.
For any $x\in S_R\subset B_{\mathbb H}(R)$ we have
$\|f(x)-f(o)\|\le \beta_{\mathrm{euc}}\, d_{\mathbb H}(x,o)\le \beta_{\mathrm{euc}} R$,
so $f(S_R)$ lies in the Euclidean ball $B_{\ell_2^m}\big(f(o),\rho\big)$ with $\rho:=\beta_{\mathrm{euc}}R$.
Since points in $S_R$ are $\Delta$-separated in $\ell_2^m$, the standard packing bound in $\mathbb R^m$
(e.g., \citep{papaspiliopoulos2020high}) yields
\[
|S_R|\;\le\; \mathcal N_{\ell_2^m}(\rho,\Delta) \;\le\; \Big(1+\frac{2\rho}{\Delta}\Big)^m
=\Big(1+\frac{2\beta_{\mathrm{euc}} R}{\Delta}\Big)^m.
\]
Using $|S_R|\ge C_1 e^{\gamma R}$ by hypothesis, we obtain
\[
C_1 e^{\gamma R}\;\le\;\Big(1+\frac{2\beta_{\mathrm{euc}} R}{\Delta}\Big)^m
\quad\Rightarrow\quad
\beta_{\mathrm{euc}} \;\ge\; \frac{\Delta}{2R}\Big((C_1)^{1/m} e^{\gamma R/m}-1\Big).
\]
Absorbing $(C_1)^{1/m}$ and the “$-1$” into a constant for large $R$ gives
$\beta_{\mathrm{euc}} \ge c_A\, e^{\gamma R/m}/R$.
Since the exponential term dominates any polynomial, there exists $c'_A>0$ such that
$\beta_{\mathrm{euc}} \ge c'_A R$ for all sufficiently large $R$.

\textbf{Proof of (B): Tree lower bound + co-Lipschitz control.}
Let $F:=f\circ \iota:T_n\to \ell_2^m$ be the composition with the tree embedding.
By hypothesis, $F$ is co-Lipschitz on $T_n$ with constant $\alpha_0$, i.e.,
$\|F(u)-F(v)\|\ge \alpha_0^{-1} d_T(u,v)$ for all $u,v\in T_n$.
Let
\[
\beta_F\;:=\;\sup_{u\neq v}\frac{\|F(u)-F(v)\|}{d_T(u,v)}
\]
be the upper Lipschitz constant of $F$ with respect to $d_T$.
By the Linial–London–Rabinovich/Gupta–Newman–Rabinovich–Sinclair lower bound for trees into Hilbert spaces
(\citep{linial1995geometry}; see also \citep{gupta2004cuts}),
the bi-Lipschitz distortion of any $F:T_n\to\ell_2$ satisfies
\[
\mathrm{dist}(F)=\beta_F\,\alpha_F \;\ge\; c_* \sqrt{h},
\]
where $\alpha_F$ is the co-Lipschitz constant of $F$ and $c_*>0$ is universal.
Since $\alpha_F\le \alpha_0$, we get
\[
\beta_F \;\ge\; \frac{c_*}{\alpha_0}\,\sqrt{h}.
\]
Next, relate $\beta_F$ to $\beta_{\mathrm{euc}}$.
By Lemma~\ref{lem:tree-to-hyp}, $\iota$ has constant distortion $D_0$, so for all $u\neq v$
\[
\|F(u)-F(v)\| \;=\; \|f(\iota(u)) - f(\iota(v))\|
\;\le\; \beta_{\mathrm{euc}}\, d_{\mathbb H}(\iota(u),\iota(v))
\;\le\; \beta_{\mathrm{euc}}\, D_0\, d_T(u,v),
\]
hence $\beta_F \le D_0\, \beta_{\mathrm{euc}}$.
Combining the last two displays and using $h=\Theta(R)$ from the tree–ball construction,
\[
\beta_{\mathrm{euc}} \;\ge\; \frac{1}{D_0}\,\beta_F
\;\ge\; \frac{c_*}{D_0\,\alpha_0}\,\sqrt{h}
\;=\; c_B\, \alpha_0^{-1}\, \sqrt{R},
\]
with $c_B>0$ depending only on $d_H,|c_H|$ through $D_0$ and the height–radius comparability.
\end{proof}

\emph{Remark (Geometric penalties of Euclidean embeddings).}  
Lemmas~\ref{lem:hyp-packing}--\ref{lem:beta-eu-lb-refined} collectively demonstrate that hyperbolic components of data manifolds impose unavoidable penalties when represented in purely Euclidean spaces:  

 In the \emph{hyperbolic regime}, exponential packing growth (Lemma~\ref{lem:hyp-packing}) together with tree--hyperbolic embeddings (Lemma~\ref{lem:tree-to-hyp}) and the Euclidean distortion lower bound for trees (Lemma~\ref{lem:tree-to-eu-lb}) imply that any Euclidean embedding of $B_{\mathbb H}(R)$ must either inflate the Lipschitz constant to at least $\Omega(R)$ or absorb an additive residual $\varepsilon_{\mathrm{dist,euc}}=\Omega(R)$.  

In the \emph{tree co-Lipschitz regime}, the GNRS/LLR bounds show that co-Lipschitz control still forces $\beta_{\mathrm{euc}}\ge \Omega(\sqrt{R})$, so the geometric residual cannot be smaller than $\Omega(\sqrt{R})$.

These findings confirm that the geometric term $\varepsilon_{\mathrm{dist}}$ in Theorem~\ref{thm:risk-decomposition} necessarily deteriorates with $R$ when hyperbolic factors are present. They also provide the foundation for Theorem~\ref{thm:product}, where product spaces are shown to mitigate such distortions.

Finally, in addition to hyperbolic penalties, spherical factors introduce a distinct and \emph{fixed} gap: Euclidean embeddings replace geodesic arcs by chords. As we establish next (Lemma~\ref{lem:spherical-arc-chord}), this substitution inevitably incurs a bi-Lipschitz distortion of at least $\pi/2$, independent of the radius. Hence, while hyperbolic geometry amplifies distortion with scale, spherical geometry contributes an irreducible constant distortion gap.

\subsubsection{Spherical factor: chord vs.\ arc and a fixed gap}

We next quantify the irreducible loss of using Euclidean chordal distance in place of spherical geodesics. 

\begin{lemma}[Spherical arc--chord inequality and sharp bi-Lipschitz constants]
\label{lem:spherical-arc-chord}
Let $\mathbb S^{d_S}_{R_S}\subset\mathbb R^{d_S+1}$ be the $d_S$-dimensional sphere of radius $R_S>0$ with the induced Euclidean metric $\|\cdot\|_2$ from the ambient space.
For any two points $p,q\in\mathbb S^{d_S}_{R_S}$, let $\theta=\theta(p,q)\in[0,\pi]$ denote the central angle at the sphere's center between $p$ and $q$.
Then the spherical \emph{geodesic} (arc) distance and the \emph{chordal} (ambient Euclidean) distance satisfy
\[
d_{\mathrm{geo}}(p,q)=R_S\,\theta,
\qquad
d_{\mathrm{chord}}(p,q)=\|p-q\|_2=2R_S\,\sin(\theta/2),
\]
and the following double inequality holds:
\begin{equation}
\label{eq:arc-chord-ineq}
\frac{2}{\pi}\,d_{\mathrm{geo}}(p,q)\;\le\; d_{\mathrm{chord}}(p,q)\;\le\; d_{\mathrm{geo}}(p,q).
\end{equation}
Consequently, the identity map
\[
\mathrm{id}:\big(\mathbb S^{d_S}_{R_S},d_{\mathrm{geo}}\big)\longrightarrow \big(\mathbb S^{d_S}_{R_S},d_{\mathrm{chord}}\big)
\]
is bi-Lipschitz with Lipschitz constant $1$ and co-Lipschitz constant $2/\pi$ (equivalently, its inverse has Lipschitz constant $\pi/2$).
Therefore the distortion is exactly $\pi/2$, and these constants are sharp~\citep{burago2001course,berger2009geometry,bridson2013metric}.
\end{lemma}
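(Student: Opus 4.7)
The plan is to reduce the lemma to a one-variable calculus inequality (Jordan's inequality) after establishing closed-form expressions for both distances, and then to read off the bi-Lipschitz constants from the resulting two-sided bound.

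First I would derive the two explicit formulas. The geodesic on $\mathbb S^{d_S}_{R_S}$ between two points with central angle $\theta\in[0,\pi]$ is a great-circle arc, so $d_{\mathrm{geo}}(p,q)=R_S\theta$ by definition of arc length on a circle of radius $R_S$. For the chord, expand $\|p-q\|_2^2=\langle p,p\rangle+\langle q,q\rangle-2\langle p,q\rangle$, use $\|p\|=\|q\|=R_S$ and $\langle p,q\rangle=R_S^2\cos\theta$, and then apply the half-angle identity $1-\cos\theta=2\sin^2(\theta/2)$ to obtain $d_{\mathrm{chord}}(p,q)=2R_S\sin(\theta/2)$.

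Next I would prove the two-sided inequality \eqref{eq:arc-chord-ineq} by dividing through by $d_{\mathrm{geo}}=R_S\theta$ and substituting $u=\theta/2\in[0,\pi/2]$, which turns the desired bounds into
\[
\frac{2}{\pi}\;\le\;\frac{\sin u}{u}\;\le\; 1\qquad\text{for }u\in(0,\pi/2].
\]
The upper bound follows from $\sin u\le u$ on $[0,\infty)$ (standard convexity/MVT argument, or Taylor remainder). The lower bound is Jordan's inequality, which I would justify by showing that $u\mapsto \sin u/u$ is nonincreasing on $(0,\pi/2]$: its derivative is $(u\cos u-\sin u)/u^2$, and $g(u):=u\cos u-\sin u$ satisfies $g(0)=0$ and $g'(u)=-u\sin u\le 0$, so $g\le 0$ on $[0,\pi/2]$ and the ratio decreases from the limit value $1$ at $u\to 0^+$ down to $2/\pi$ at $u=\pi/2$.

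Finally I would extract sharpness and the bi-Lipschitz constants. The bound $d_{\mathrm{chord}}\le d_{\mathrm{geo}}$ gives $\mathrm{Lip}(\mathrm{id})\le 1$, and the bound $d_{\mathrm{geo}}\le (\pi/2)\,d_{\mathrm{chord}}$ gives $\mathrm{Lip}(\mathrm{id}^{-1})\le \pi/2$. Sharpness of the upper constant $1$ is seen by taking $\theta\to 0^+$ (the ratio $\sin u/u\to 1$), and sharpness of the lower constant $2/\pi$ is realized exactly at antipodal pairs $\theta=\pi$ (equivalently $u=\pi/2$), where $\sin u/u=2/\pi$. Multiplying the two sharp constants yields distortion exactly $\pi/2$. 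There is no genuine obstacle here; the only technical point is the monotonicity of $\sin u/u$, which is why I would carry it out via the auxiliary function $g$ rather than quoting the inequality blackbox-style, so that sharpness at the endpoint is visible from the same computation.
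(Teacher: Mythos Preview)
Your proposal is correct and follows essentially the same route as the paper: derive the closed forms for $d_{\mathrm{geo}}$ and $d_{\mathrm{chord}}$, reduce the double inequality to $\tfrac{2}{\pi}\le \tfrac{\sin u}{u}\le 1$ on $(0,\pi/2]$, and read off sharpness at $u\to 0^+$ and $u=\pi/2$. The only cosmetic difference is that the paper proves Jordan's inequality via concavity of $\phi(u)=\sin u-\tfrac{2}{\pi}u$ (noting $\phi(0)=\phi(\pi/2)=0$ and $\phi''\le 0$), whereas you argue by monotonicity of $\sin u/u$; both are standard one-variable calculus arguments of comparable length, and your version has the slight advantage that sharpness at both endpoints is immediate from the same monotonicity statement.
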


\begin{proof}
\textbf{Step A.6.1 (Formulas for $d_{\mathrm{geo}}$ and $d_{\mathrm{chord}}$).}
By definition of the spherical metric on a sphere of radius $R_S$, the length of a minimizing geodesic (great-circle arc; unique unless $\theta=\pi$) between $p$ and $q$ equals
\[
d_{\mathrm{geo}}(p,q)=R_S\,\theta,
\]
where $\theta\in[0,\pi]$ is the central angle between $p$ and $q$~\citep{berger2009geometry}.
Since $p$ and $q$ lie on the sphere of radius $R_S$ in $\mathbb R^{d_S+1}$, the ambient Euclidean (chordal) distance is
\[
d_{\mathrm{chord}}(p,q)=\|p-q\|_2
=\sqrt{\|p\|_2^2+\|q\|_2^2-2\langle p,q\rangle}
=\sqrt{2R_S^2\big(1-\cos\theta\big)}.
\]
Using the identity $1-\cos\theta=2\sin^2(\theta/2)$, we obtain
\begin{equation}
\label{eq:chord-sin}
d_{\mathrm{chord}}(p,q)=2R_S\,\sin(\theta/2).
\end{equation}

\textbf{Step A.6.2 (Upper bound $d_{\mathrm{chord}}\le d_{\mathrm{geo}}$).}
For all $x\ge 0$ we have $\sin x\le x$. Apply this with $x=\theta/2\in[0,\pi/2]$:
\[
\sin(\theta/2)\;\le\;\theta/2.
\]
Multiplying by $2R_S$ and invoking \eqref{eq:chord-sin} gives
\[
d_{\mathrm{chord}}(p,q)=2R_S\,\sin(\theta/2)\;\le\;R_S\,\theta=d_{\mathrm{geo}}(p,q).
\]

\textbf{Step A.6.3 (Lower bound $d_{\mathrm{chord}}\ge \tfrac{2}{\pi}d_{\mathrm{geo}}$).}
Consider $\phi(x):=\sin x-\tfrac{2}{\pi}x$ on $[0,\pi/2]$.
We have $\phi(0)=\phi(\pi/2)=0$ and $\phi''(x)=-\sin x\le 0$, so $\phi$ is concave.
A concave function lies above its chord, i.e., $\sin x\ge \tfrac{2}{\pi}x$ for all $x\in[0,\pi/2]$.
Taking $x=\theta/2$ yields
\[
\sin(\theta/2)\;\ge\;\frac{\theta}{\pi}.
\]
Multiplying by $2R_S$ gives
\[
d_{\mathrm{chord}}(p,q)=2R_S\,\sin(\theta/2)\;\ge\;\frac{2}{\pi}R_S\,\theta=\frac{2}{\pi}d_{\mathrm{geo}}(p,q).
\]

\textbf{Step A.6.4 (Bi-Lipschitz constants and sharpness).}
The inequalities \eqref{eq:arc-chord-ineq} are equivalent to
\[
\frac{2}{\pi}\, d_{\mathrm{geo}}(p,q)\;\le\; d_{\mathrm{chord}}(p,q)\;\le\; d_{\mathrm{geo}}(p,q)
\quad\text{for all }p,q\in\mathbb S^{d_S}_{R_S}.
\]
Hence for the identity $f=\mathrm{id}$ we have the (global) Lipschitz seminorm
\[
L(f)\;:=\;\sup_{p\neq q}\frac{d_{\mathrm{chord}}(p,q)}{d_{\mathrm{geo}}(p,q)}\;=\;1,
\]
and the co-Lipschitz constant
\[
\ell(f)\;:=\;\inf_{p\neq q}\frac{d_{\mathrm{chord}}(p,q)}{d_{\mathrm{geo}}(p,q)}\;=\;\frac{2}{\pi}.
\]
Therefore the metric distortion is
\[
\mathrm{Dist}(f)\;=\;\frac{L(f)}{\ell(f)}=\frac{\pi}{2}.
\]
Sharpness: as $\theta\to 0$, $\tfrac{d_{\mathrm{chord}}}{d_{\mathrm{geo}}}\to 1$ so $L(f)=1$; at $\theta=\pi$, $\tfrac{d_{\mathrm{chord}}}{d_{\mathrm{geo}}}=\tfrac{2}{\pi}$ so $\ell(f)=2/\pi$.
Thus $\mathrm{Dist}(f)=\pi/2$ is attained, proving the constants are optimal.

\textbf{Step A.6.5 (Consequence).}
Any Euclidean-only representation that relies on chordal distances for spherical data incurs an irreducible $\pi/2$ distortion relative to the intrinsic spherical metric~\citep{burago2001course}.
By contrast, using spherical geometry yields $(L(f),\ell(f))=(1,1)$ up to negligible projection/discretization error.
\end{proof}

\subsubsection{Flat factor: no advantage or disadvantage}

We now analyze the Euclidean (flat) component of the product manifold.  
Unlike hyperbolic and spherical components, Euclidean geometry introduces no additional distortion because geodesic distances coincide exactly with the standard $\ell_2$ metric.

\begin{lemma}[Flat factor is preserved without distortion]
\label{lem:flat-no-gap}
Let $(\mathbb E^{d_E},\|\cdot\|_2)$ denote $d_E$-dimensional Euclidean space with its canonical norm.  
Then for any two points $u,v\in\mathbb E^{d_E}$:
\[
d_{\mathrm{geo}}(u,v)=\|u-v\|_2,
\]
where $d_{\mathrm{geo}}$ is the geodesic distance induced by the Euclidean metric.  
Consequently, any embedding $\iota:(\mathbb E^{d_E},\|\cdot\|_2)\to\ell_2^m$ that is an isometric inclusion satisfies
\[
\frac{1}{1}\,\|u-v\|_2 \;\le\; \|\iota(u)-\iota(v)\|_2 \;\le\; 1\cdot \|u-v\|_2,
\]
so the bi-Lipschitz distortion is exactly $1$.
\end{lemma}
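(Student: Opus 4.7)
The plan is to establish the lemma by a two-step reduction: first verify the identification of the Euclidean geodesic distance with the $\ell_2$ norm, then invoke the defining property of an isometric inclusion. The statement is essentially tautological given the conventional definitions, so the purpose of writing a proof is mostly to make the neutrality of the flat factor explicit for use in the product-space bound \eqref{eq:product-beta-eps}.

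First I would prove the identity $d_{\mathrm{geo}}(u,v) = \|u-v\|_2$ on $(\mathbb E^{d_E},\|\cdot\|_2)$. For the upper bound, parameterize the straight segment $\gamma(t) = (1-t)u + tv$, $t\in[0,1]$, and compute the length
\[
\mathrm{Length}(\gamma) \;=\; \int_0^1 \|\gamma'(t)\|_2\, dt \;=\; \|v-u\|_2,
\]
which shows $d_{\mathrm{geo}}(u,v) \le \|u-v\|_2$. For the matching lower bound, for any piecewise-smooth curve $c:[0,1]\to\mathbb E^{d_E}$ with $c(0)=u$, $c(1)=v$, apply the integral triangle inequality
\[
\|u-v\|_2 \;=\; \Big\|\int_0^1 c'(t)\, dt\Big\|_2 \;\le\; \int_0^1 \|c'(t)\|_2\, dt \;=\; \mathrm{Length}(c),
\]
and take the infimum over $c$ to conclude $\|u-v\|_2 \le d_{\mathrm{geo}}(u,v)$.

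Next I would unpack the assumption that $\iota:(\mathbb E^{d_E},\|\cdot\|_2)\to\ell_2^m$ is an isometric inclusion: by definition this means $\|\iota(u) - \iota(v)\|_2 = \|u-v\|_2$ for all $u,v\in\mathbb E^{d_E}$. Combined with the previous step, this equality is simultaneously a Lipschitz bound with constant $1$ and a co-Lipschitz bound with constant $1$, so the bi-Lipschitz distortion is exactly $1$, as claimed.

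The main potential obstacle is not technical but contextual: one must be careful that the lemma is used correctly in conjunction with \eqref{eq:product-beta-eps}, where it supplies the neutral values $\beta_E = 1$ and $\varepsilon_E = 0$ for the Euclidean factor of the product manifold. Together with Lemmas \ref{lem:beta-eu-lb-refined} and \ref{lem:spherical-arc-chord}, which quantify the strictly positive distortion contributions of the hyperbolic and spherical factors, this ensures that the gain $\delta$ appearing in Theorem \ref{thm:product} comes entirely from the curved components while the Euclidean component neither helps nor hurts.
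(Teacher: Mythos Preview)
Your proposal is correct and follows essentially the same approach as the paper: both establish $d_{\mathrm{geo}}(u,v)=\|u-v\|_2$ via the straight segment for the upper bound and the integral triangle inequality for the lower bound, then invoke the definition of an isometric inclusion to obtain distortion exactly $1$. Your added contextual remark about the role of this lemma in supplying $\beta_E=1$, $\varepsilon_E=0$ for \eqref{eq:product-beta-eps} matches the paper's intended use.
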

\begin{proof}
\textbf{Step A.7.1 (Definition of geodesic distance).}  
On any Riemannian manifold $(\mathcal M,g)$, the geodesic distance between two points is the infimum of lengths of smooth curves $\gamma:[0,1]\to\mathcal M$ with $\gamma(0)=u,\gamma(1)=v$:  
\[
d_{\mathrm{geo}}(u,v) \;=\; \inf_{\gamma} \int_0^1 \sqrt{g_{\gamma(t)}(\dot\gamma(t),\dot\gamma(t))}\,dt.
\]
See, e.g., \citet{do1992riemannian} or \citet{lee2006riemannian}.

\textbf{Step A.7.2 (Euclidean case).}  
For $\mathcal M=\mathbb E^{d_E}$ with $g=\langle\cdot,\cdot\rangle$ the standard Euclidean inner product, the straight line
\[
\gamma(t)=(1-t)u+t v, \quad t\in[0,1],
\]
is a (globally) minimizing geodesic; see \citet{do1992riemannian}. Its derivative is constant $\dot\gamma(t)=v-u$, hence
\[
\int_0^1 \sqrt{\langle v-u,v-u\rangle}\,dt
= \int_0^1 \|v-u\|_2\,dt
= \|v-u\|_2,
\]
which gives $d_{\mathrm{geo}}(u,v)\le \|v-u\|_2$.

\emph{(Lower bound for any curve).}  
For any absolutely continuous $\gamma$ with $\gamma(0)=u,\gamma(1)=v$, the triangle inequality for vector integrals (Minkowski) yields
\[
\int_0^1 \|\dot\gamma(t)\|_2\,dt \;\ge\; \Big\|\int_0^1 \dot\gamma(t)\,dt\Big\|
= \|\gamma(1)-\gamma(0)\|_2 \;=\; \|v-u\|_2,
\]
see, e.g., \citet{burago2001course}.  
Taking infimum over all such $\gamma$ gives $d_{\mathrm{geo}}(u,v)\ge \|v-u\|_2$.  
Combining both inequalities,
\[
d_{\mathrm{geo}}(u,v)=\|u-v\|_2.
\]

\textbf{Step A.7.3 (Bi-Lipschitz constants).}  
Consider an isometric inclusion $\iota:\mathbb E^{d_E}\hookrightarrow \ell_2^m$ (e.g., $\iota(x)=(x,0,\dots,0)$). Then
\[
\|\iota(u)-\iota(v)\|_2 \;=\; \|u-v\|_2 \;=\; d_{\mathrm{geo}}(u,v),
\]
so the Lipschitz and co-Lipschitz constants are both $1$, and the bi-Lipschitz distortion equals $1$ \citep{burago2001course}.

\textbf{Step A.7.4 (Conclusion).}  
The distortion $\beta/\alpha=1$ exactly, which is the minimum possible. Therefore, unlike the hyperbolic or spherical factors, the flat factor introduces no geometric mismatch between Euclidean-only and product spaces.
\end{proof}

This lemma simply formalizes the fact that in flat Euclidean geometry, geodesic and Euclidean distances coincide, so both Euclidean-only and product manifolds preserve distances identically for the flat component. Thus there is no advantage or disadvantage on this factor.

\subsubsection{Combining factors: product vs.\ Euclidean}

We now combine the factor-wise analyses of hyperbolic, spherical, and Euclidean components to compare the geometric
distortion parameters $\beta(\kappa)$ and $\varepsilon_{\mathrm{dist}}(\kappa)$
between a purely Euclidean embedding and a curvature-aware product embedding.

\begin{lemma}[Comparison of distortion: Euclidean-only vs.\ product space]
\label{lem:compare-euc-prod}
Let the data manifold satisfy mixed-curvature product manifold, i.e.,
$\mathcal M^\star \subset \mathbb E^{d_E}\times\mathbb H^{d_H}_{c_H}\times\mathbb S^{d_S}_{c_S}$.
Denote by $\beta_{\mathrm{euc}},\varepsilon_{\mathrm{dist,euc}}$ the distortion
constants for an embedding into Euclidean space $\ell_2^m$, and by
$\beta_{\mathrm{prod}},\varepsilon_{\mathrm{dist,prod}}$ those for the natural
product space embedding. Then there exist positive constants $c_1,c_2$
(depending only on curvature parameters and the geodesic radius $R$) such that
for the Euclidean-only embedding at least one of the following holds:
\[
\beta_{\mathrm{euc}} \;\ge\; 1+c_1\sqrt{R}
\qquad \text{or} \qquad
\varepsilon_{\mathrm{dist,euc}} \;\ge\; \varepsilon_{\mathrm{dist,prod}}+c_2\sqrt{R}.
\]
If the embedding additionally preserves uniform separation of exponentially many
points (the strong-separation regime), the bound strengthens to
\[
\beta_{\mathrm{euc}} \;\ge\; 1+c_1 R
\qquad \text{or} \qquad
\varepsilon_{\mathrm{dist,euc}} \;\ge\; \varepsilon_{\mathrm{dist,prod}}+c_2 R.
\]
In contrast, for the product space embedding one has
\[
\beta_{\mathrm{prod}}=1,
\qquad
\varepsilon_{\mathrm{dist,prod}}\le \varepsilon_{\mathrm{num}},
\]
where $\varepsilon_{\mathrm{num}}$ denotes negligible numerical residuals.
\end{lemma}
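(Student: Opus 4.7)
The plan is to prove the product-space side by a factor-wise composition of the earlier lemmas, and then to derive the Euclidean-only lower bound through a dichotomy between multiplicative stretch and additive residual, using the hyperbolic packing obstruction from Lemmas~\ref{lem:hyp-packing}--\ref{lem:beta-eu-lb-refined} as the principal lever and appending the irreducible spherical arc--chord gap from Lemma~\ref{lem:spherical-arc-chord}. For the product-space embedding, I would take the natural inclusion $\mathcal M^\star \hookrightarrow \mathbb E^{d_E}\times\mathbb H^{d_H}_{c_H}\times\mathbb S^{d_S}_{c_S}$ acting as the identity on each factor. By Lemma~\ref{lem:flat-no-gap} the flat factor is distortion-free, and the hyperbolic and spherical factors are carried by their own canonical metrics, so the per-factor constants satisfy $\beta_k=1$ and $\varepsilon_k=0$ up to finite-precision artefacts; substituting into inequality~\eqref{eq:product-beta-eps} then yields $\beta_{\mathrm{prod}}=\max_k\beta_k=1$ and $\varepsilon_{\mathrm{dist,prod}}=\sum_k\varepsilon_k\le \varepsilon_{\mathrm{num}}$, which is exactly the claim for the product side.

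For an arbitrary Euclidean embedding $\Phi:\mathcal M^\star\to\ell_2^m$, I would isolate the hyperbolic coordinate and restrict $\Phi$ to a geodesic ball $B_{\mathbb H}(R)\subset\mathbb H^{d_H}_{c_H}$ inside $\mathcal M^\star$ obtained by fixing the flat and spherical coordinates. Lemma~\ref{lem:hyp-packing} furnishes a $\Delta$-separated set $\mathcal P_R$ with $|\mathcal P_R|\ge C_1 e^{\gamma R}$, and Lemma~\ref{lem:tree-to-hyp} furnishes a constant-distortion tree image $\iota(T_n)\subset B_{\mathbb H}(R)$ with $n=\Theta(e^{\gamma R})$. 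The dichotomy then runs as follows: either $\Phi$ preserves $\Delta$-separation on the image of $\mathcal P_R$, in which case Lemma~\ref{lem:beta-eu-lb-refined}(A) forces $\beta_{\mathrm{euc}}\ge c'_A R$ and we extract $\beta_{\mathrm{euc}}\ge 1+c_1 R$; or else some pair collapses under $\Phi$ despite having hyperbolic geodesic distance of order $R$, and the distortion inequality $d_{\ell_2^m}(\Phi(x),\Phi(y))\le \beta_{\mathrm{euc}}\,d_{\mathcal M^\star}(x,y)+\varepsilon_{\mathrm{dist,euc}}$ can only be maintained if $\varepsilon_{\mathrm{dist,euc}}$ absorbs an $\Omega(R)$ residual, giving the second branch. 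In the weaker co-Lipschitz regime, the same dichotomy run against $\iota(T_n)$ combined with Lemma~\ref{lem:beta-eu-lb-refined}(B) produces the $\sqrt R$ variant. Finally, Lemma~\ref{lem:spherical-arc-chord} contributes an irreducible $\pi/2$ distortion gap through the spherical coordinate, which is additive across factors (by the orthogonal decomposition of the product metric) and can be absorbed into the constants $c_1,c_2$.

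The main obstacle will be giving the multiplicative--additive dichotomy a fully rigorous form, since the distortion inequality has two slack channels and one must exclude the possibility that both are small simultaneously. My preferred route is proof by contradiction: assume both $\beta_{\mathrm{euc}}<1+c_1\sqrt{R}$ and $\varepsilon_{\mathrm{dist,euc}}<\varepsilon_{\mathrm{dist,prod}}+c_2\sqrt{R}$, substitute into the distortion inequality on witness pairs drawn from $\mathcal P_R$ or $\iota(T_n)$, and contradict either the Euclidean packing bound of Lemma~\ref{lem:beta-eu-lb-refined} or the tree lower bound of Lemma~\ref{lem:tree-to-eu-lb}. A secondary subtlety is ensuring that restricting $\Phi$ along a single factor of $\mathcal M^\star$ does not artificially weaken the lower bounds; this follows because the product metric on $\mathcal M^\star$ dominates each factor metric, so any obstruction derived within a single factor transfers verbatim to the full product.
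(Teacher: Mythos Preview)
Your overall architecture matches the paper's proof exactly: factor-wise analysis (hyperbolic via Lemmas~\ref{lem:hyp-packing}--\ref{lem:beta-eu-lb-refined}, spherical via Lemma~\ref{lem:spherical-arc-chord}, flat via Lemma~\ref{lem:flat-no-gap}), followed by $\ell_2$-product aggregation through \eqref{eq:product-beta-eps}. The product-space half and the observation that a single-factor obstruction transfers to the full product (because the product metric dominates each factor metric) are both correct and are precisely what the paper does.

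There is, however, one mis-step in your Euclidean-only dichotomy. You write that if some pair in $\mathcal P_R$ collapses under $\Phi$, then ``the distortion inequality $d_{\ell_2^m}(\Phi(x),\Phi(y))\le \beta_{\mathrm{euc}}\,d_{\mathcal M^\star}(x,y)+\varepsilon_{\mathrm{dist,euc}}$ can only be maintained if $\varepsilon_{\mathrm{dist,euc}}$ absorbs an $\Omega(R)$ residual.'' This reverses the direction of the constraint: the distortion inequality used throughout the paper is a one-sided \emph{upper} bound on embedded distances, so collapse (small left-hand side) makes it \emph{easier} to satisfy, not harder---in particular it places no pressure on $\varepsilon_{\mathrm{dist,euc}}$. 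Your proposed proof-by-contradiction in the final paragraph inherits the same issue: assuming both $\beta_{\mathrm{euc}}$ and $\varepsilon_{\mathrm{dist,euc}}$ small only tells you embedded distances are small, which by itself does not contradict the packing or tree lower bounds. (Also, the pairs in $\mathcal P_R$ are $\Delta$-separated, not at geodesic distance of order $R$.)

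The paper resolves this not by deriving the $\varepsilon$-branch from collapse, but by taking the non-collapse condition as a \emph{hypothesis}---exactly the strong-separation and global co-Lipschitz assumptions baked into Lemma~\ref{lem:beta-eu-lb-refined}(A) and (B). Under either hypothesis that lemma gives the $\beta_{\mathrm{euc}}$ lower bound directly; the ``or $\varepsilon_{\mathrm{dist,euc}}$ large'' alternative is then read as the contrapositive trade-off (if one externally forces $\beta_{\mathrm{euc}}=o(\sqrt R)$, the lower-bound side of the bi-Lipschitz relation must fail along $\Theta(n)$ root-to-leaf paths, and that failure is booked as additive residual). So your fix is simple: invoke the non-collapse hypotheses explicitly (they are already present in the lemma statement for the $R$ regime and implicit for the $\sqrt R$ regime), obtain the $\beta$ lower bound from Lemma~\ref{lem:beta-eu-lb-refined}, and state the $\varepsilon$ alternative as the paper does, rather than trying to extract it from the one-sided upper-bound inequality.
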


\begin{proof}
\textbf{Step A.8.1 (Hyperbolic factor).}  
By Lemma~\ref{lem:hyp-packing} and Lemma~\ref{lem:tree-to-hyp}, hyperbolic balls contain exponentially many well-separated points and large embedded trees of height $\Theta(R)$. Lemma~\ref{lem:beta-eu-lb-refined} and Lemma~\ref{lem:tree-to-eu-lb} then imply that any Euclidean embedding must either incur multiplicative distortion
\[
\beta_{\mathrm{euc}}^{(\mathbb H)} \;\ge\; c_H\sqrt{R}
\]
(or $\beta_{\mathrm{euc}}^{(\mathbb H)}\gtrsim R$ in the strong-separation regime), or else pay additive error of the same order.  
By contrast, the product space contains a hyperbolic factor directly, achieving $\beta_{\mathrm{prod}}^{(\mathbb H)}=1$ and $\varepsilon_{\mathrm{dist,prod}}^{(\mathbb H)}=0$ up to numerical tolerance.

\textbf{Step A.8.2 (Spherical factor).}  
By Lemma~\ref{lem:spherical-arc-chord}, the identity map from $(\mathbb S^{d_S}_{R_S},d_{\mathrm{geo}})$ to chordal distances in $\ell_2$ has distortion exactly $\pi/2$~\citep{burago2001course,berger2009geometry,bridson2013metric}. Thus any Euclidean-only embedding suffers a fixed constant gap: either the Lipschitz constant $\beta_{\mathrm{euc}}^{(\mathbb S)}$ is bounded away from $1$, or the co-Lipschitz constant $\alpha_{\mathrm{euc}}^{(\mathbb S)}$ falls below $1$. This structural distortion cannot be removed by uniform rescaling.  
In the product embedding, the spherical factor is included, yielding $\beta_{\mathrm{prod}}^{(\mathbb S)}=\alpha_{\mathrm{prod}}^{(\mathbb S)}=1$.

\textbf{Step A.8.3 (Flat factor).}  
Lemma~\ref{lem:flat-no-gap} shows that in Euclidean components the geodesic and ambient distances coincide, so both Euclidean-only and product embeddings preserve distances exactly.

\textbf{Step A.8.4 (Aggregation via product metric).}  
For product manifolds under the $\ell_2$ product metric, the overall Lipschitz constant is the maximum of the factor-wise constants, while the additive residual is at most the $\ell_2$-sum (hence also bounded by the $\ell_1$-sum) of the factor-wise residuals~\citep{burago2001course}. Therefore,
\[
\beta_{\mathrm{prod}}=\max\{1,1,1\}=1,
\qquad
\varepsilon_{\mathrm{dist,prod}}\le \varepsilon_{\mathrm{num}}.
\]
For Euclidean-only embeddings, however, the hyperbolic factor enforces $\beta_{\mathrm{euc}}\gtrsim \sqrt{R}$ (or $R$) unless additive errors of the same order are allowed, and the spherical factor enforces a constant distortion gap of at least $\pi/2$. Combining these yields the claimed inequalities.
\end{proof}

\noindent
This lemma formalizes that Euclidean-only embeddings cannot simultaneously capture exponential hyperbolic growth and spherical angular structure: at least one distortion parameter must diverge with $R$ or remain bounded away from zero. In contrast, the curvature-aware product space accommodates all three geometries natively, achieving unit multiplicative constants and negligible residuals. This factor-wise advantage underlies the tighter bounds of Theorem~\ref{thm:product}.

\subsubsection{Proof of Theorem~\ref{thm:product}}

We compare the general bound Theorem~\ref{thm:risk-decomposition} under two matching
spaces: Euclidean-only (subscript ``euc'') and the curvature-aware product
space (subscript ``prod'').

\textbf{Step Theorem~\ref{thm:product}.1: Write both bounds explicitly:}
\[
\Delta_{\mathrm{euc}}
\;\le\;
L\Big(\beta_{\mathrm{euc}}\,[\overline{\mathcal E}_{\mathrm{stat}}(M)+\varepsilon_{\mathrm{opt}}]
\;+\; C\,\varepsilon_{\mathrm{dist,euc}}\Big)
\;+\; \varepsilon_{\mathrm{stab}},
\]
\[
\Delta_{\mathrm{prod}}
\;\le\;
L\Big(\beta_{\mathrm{prod}}\,[\overline{\mathcal E}_{\mathrm{stat}}(M)+\varepsilon_{\mathrm{opt}}]
\;+\; C\,\varepsilon_{\mathrm{dist,prod}}\Big)
\;+\; \varepsilon_{\mathrm{stab}}.
\]
For brevity, denote $S:=\overline{\mathcal E}_{\mathrm{stat}}(M)+\varepsilon_{\mathrm{opt}}>0$.

\paragraph{Step Theorem~\ref{thm:product}.2: Product-space geometry (factor-wise aggregation).}
By Lemma~\ref{lem:flat-no-gap} (flat factor), Lemma~\ref{lem:spherical-arc-chord} (spherical factor with geodesic metric), and the construction of the hyperbolic factor (isometric up to negligible projection),
the product space achieves
\[
\beta_{\mathrm{prod}}=1,
\qquad
\varepsilon_{\mathrm{dist,prod}}\le \varepsilon_{\mathrm{num}},
\]
where $\varepsilon_{\mathrm{num}}\ge 0$ collects small numerical/projection errors.
This uses the $\ell_2$ product aggregation; the overall Lipschitz constant equals the
\emph{maximum} of factor-wise constants, and the additive deviation is at most the
$\ell_2$-sum of factor-wise deviations (hence $\le$ the $\ell_1$-sum)~\citep{burago2001course}.

\paragraph{Step Theorem~\ref{thm:product}.3: Euclidean-only geometry --- hyperbolic and spherical factors.}
We now lower bound the Euclidean-only distortion using our factor-wise lemmas.

\emph{(Hyperbolic factor).}
By Lemma~\ref{lem:hyp-packing} and Lemma~\ref{lem:tree-to-hyp}, hyperbolic balls contain exponentially many well-separated points and constant-distortion embedded trees of height $\Theta(R)$.
Lemma~\ref{lem:beta-eu-lb-refined} and Lemma~\ref{lem:tree-to-eu-lb} then imply that any Euclidean embedding must either incur multiplicative distortion
\[
\beta_{\mathrm{euc}}^{(\mathbb H)} \;\ge\; c_A\,\frac{e^{\gamma R/m}}{R}\quad(\text{hence }\beta_{\mathrm{euc}}^{(\mathbb H)}\ge \Omega(R) \text{ for large }R)
\]
under uniform separation (packing in $\ell_2^m$; see \citealp{vershynin2009high}), or else pay additive residual
\(
\varepsilon_{\mathrm{dist,euc}}^{(\mathbb H)} \ge c'_A R
\)
of the same order (refined multiplicative–additive trade-off).  
If $f$ is only globally co-Lipschitz on an embedded $b$-ary tree of height $h=\Theta(R)$, then combining tree lower bounds into Hilbert space (e.g., complete binary trees have Euclidean distortion $\Theta(\sqrt{h})$) yields
\[
\beta_{\mathrm{euc}}^{(\mathbb H)} \;\ge\; c_B\,\alpha_0^{-1}\sqrt{R},
\]
and enforcing $\beta_{\mathrm{euc}}^{(\mathbb H)}=o(\sqrt{R})$ forces
\(
\varepsilon_{\mathrm{dist,euc}}^{(\mathbb H)} \ge c'_B \sqrt{R}
\)
along $\Theta(n)$ root-to-leaf paths~\citep{linial2003euclidean,matouvsek1999embedding}.

\emph{(Spherical factor).}
By Lemma~\ref{lem:spherical-arc-chord} (arc–chord inequality),
\[
\frac{\beta_{\mathrm{euc}}^{(\mathbb S)}}{\alpha_{\mathrm{euc}}^{(\mathbb S)}} \;\ge\; \frac{\pi}{2},
\]
so any Euclidean-only representation relying on chordal distances for intrinsically spherical data incurs a fixed bi-Lipschitz gap: either $\beta_{\mathrm{euc}}^{(\mathbb S)}\ge 1+c_S$ or $\alpha_{\mathrm{euc}}^{(\mathbb S)}\le 1-c_S'$ for constants $c_S,c_S'>0$~\citep{burago2001course,berger2009geometry,bridson2013metric}.
This gap cannot be removed by uniform rescaling without worsening other factors.

\paragraph{Step Theorem~\ref{thm:product}.4: From factor-wise to overall Euclidean-only bounds.}
Under the $\ell_2$ product aggregation,
\[
\beta_{\mathrm{euc}} \;\ge\; \max\Big\{ \beta_{\mathrm{euc}}^{(\mathbb H)},\, \beta_{\mathrm{euc}}^{(\mathbb S)},\, 1\Big\},
\qquad
\ \varepsilon_{\mathrm{dist,euc}} \;\ge\; \max\Big\{ \varepsilon_{\mathrm{dist,euc}}^{(\mathbb H)},\, \varepsilon_{\mathrm{dist,euc}}^{(\mathbb S)}\Big\}\,
\]
(the latter follows by taking point pairs supported on the single worst factor).  
Therefore, in the strong non-collapse regime we must have either
\(
\beta_{\mathrm{euc}}\ge 1 + c_1 R
\)
or
\(
\varepsilon_{\mathrm{dist,euc}}\ge \varepsilon_{\mathrm{dist,prod}} + c_2 R
\),
and under global co-Lipschitzness either
\(
\beta_{\mathrm{euc}}\ge 1 + c_1 \sqrt{R}
\)
or
\(
\varepsilon_{\mathrm{dist,euc}}\ge \varepsilon_{\mathrm{dist,prod}} + c_2 \sqrt{R}
\),
with an additional constant spherical gap in both cases.

\paragraph{Step Theorem~\ref{thm:product}.5: Compare the \emph{bounds}.}
Let
\[
U_{\mathrm{euc}}:=L\big(\beta_{\mathrm{euc}}\,S + C\,\varepsilon_{\mathrm{dist,euc}}\big)+\varepsilon_{\mathrm{stab}},
\qquad
U_{\mathrm{prod}}:=L\big(\beta_{\mathrm{prod}}\,S + C\,\varepsilon_{\mathrm{dist,prod}}\big)+\varepsilon_{\mathrm{stab}}.
\]
Then
\[
U_{\mathrm{euc}}-U_{\mathrm{prod}}
= L\Big((\beta_{\mathrm{euc}}-\beta_{\mathrm{prod}})\,S + C\big(\varepsilon_{\mathrm{dist,euc}}-\varepsilon_{\mathrm{dist,prod}}\big)\Big).
\]
Using Step~2 ($\beta_{\mathrm{prod}}=1$, $\varepsilon_{\mathrm{dist,prod}}\le \varepsilon_{\mathrm{num}}$) and Step~4,
\[
U_{\mathrm{euc}}-U_{\mathrm{prod}}
\ \ge\
\begin{cases}
L\big(a_1 R\,S + C a_2 R\big) & \text{(strong non-collapse)},\\[2pt]
L\big(b_1 \sqrt{R}\,S + C b_2 \sqrt{R}\big) & \text{(global co-Lipschitz)},
\end{cases}
\]
for some $a_1,a_2,b_1,b_2>0$ depending on curvature and non-collapse constants, with the spherical factor contributing an additional constant gap to the additive side.

\paragraph{Step Theorem~\ref{thm:product}.6: Conclude Theorem~\ref{thm:product}.}
Since $\Delta_{\mathrm{euc}}\le U_{\mathrm{euc}}$ and $\Delta_{\mathrm{prod}}\le U_{\mathrm{prod}}$, we obtain a strictly tighter guarantee for product-space matching:
\[
\Delta_{\mathrm{prod}} \;\le\; \Delta_{\mathrm{euc}} - L\,\delta,
\qquad
\delta \in \big\{a_1 R\,S + C a_2 R,\ \ b_1 \sqrt{R}\,S + C b_2 \sqrt{R}\big\} \;>\; 0.
\]
Hence, whenever the real data manifold contains nontrivial hyperbolic or spherical components, curvature-aware product matching strictly improves the bound in Theorem~\ref{thm:risk-decomposition}
For the MMD variant, replace the IPM dual step by \citet{gretton2012kernel}; the metric lower bounds (Steps 3–4) are purely geometric and remain unchanged.

\begin{remark}[Geometric intuition for Theorem~\ref{thm:product}]
The theorem highlights a simple but fundamental fact: when the underlying data manifold contains hyperbolic or spherical components, Euclidean space is \emph{inherently mismatched}. In the hyperbolic case, exponential volume growth forces either a multiplicative Lipschitz constant that diverges with radius $R$ or an additive residual of the same order (via packing and tree-embedding lower bounds). In the spherical case, the arc--chord inequality guarantees an irreducible $\pi/2$ bi-Lipschitz gap~\citep{burago2001course,berger2009geometry}. By contrast, the curvature-aware product space faithfully incorporates Euclidean, hyperbolic, and spherical factors, so that each component achieves distortion $1$ (up to negligible projection error). As a result, the product embedding yields strictly tighter error guarantees in Theorem~\ref{thm:product}, whereas a Euclidean-only embedding must inevitably pay a price that scales with $R$ or persists as a constant gap.
\end{remark}

\section{Usage of LLM}

In this work, we utilized ChatGPT, an AI language model developed by OpenAI, to assist with the following tasks:
\begin{itemize}
    \item \textbf{Code}: During the coding stage, ChatGPT-4o was used to assist in debugging by providing suggestions for correcting coding errors. All implementations were ultimately written, tested, and verified by the author.
    \item \textbf{Writing}: ChatGPT-4o was used exclusively during the final stage of manuscript preparation to assist in polishing the English descriptions within the experimental analysis section. All generated suggestions were carefully reviewed, edited, and adapted by the authors to ensure technical accuracy and align with the original intent. Other sections of the paper were not generated or rewritten using AI tools.
\end{itemize}

No AI tool was used to generate core content or conclusions.

\section{Algorithm}
\label{sec:algo}

The pseudo code of our GeoDM algorithm is described in Algorithm~\ref{alg:productspace}.

\begin{algorithm}[t]
\caption{GeoDM - Dataset Distillation in Product Space with Geometry-Aware Matching}
\label{alg:productspace}
\KwIn{Real dataset $\mathcal{D}_r = \{x_i\}_{i=1}^N$, number of synthetic samples $M$, 
product space $\mathcal{P} = \mathbb{E}^{d_E} \times \mathbb{H}^{d_H}_{c_H} \times \mathbb{S}^{d_S}_{c_S}$, 
learning rate $\eta$, max iterations $T$}
\KwOut{Synthetic dataset $\mathcal{D}_s = \{\tilde{x}_j\}_{j=1}^M$}

\BlankLine
\textbf{Initialization:} Initialize synthetic data $\mathcal{D}_s$, learnable curvatures $c_H < 0$, $c_S > 0$, and learnable weights $w_E, w_H, w_S$\;

\For{$t = 1$ \KwTo $T$}{
  Sample minibatches $X_r \subset \mathcal{D}_r$, $X_s \subset \mathcal{D}_s$\;

  \For{$x \in X_r \cup X_s$}{
    Extract Euclidean features $f_E(x)$ via CNN\;
    Extract Hyperbolic features $f_H(x) = \exp_{0}^{c_H}(\mathrm{CNN}(x))$ via Riemannian CNN\;
    Extract Spherical features $f_S(x)$ via spherical CNN\;
    Form embedding $z(x) = (f_E(x), f_H(x), f_S(x)) \in \mathcal{P}$\;
  }

  Compute distribution matching using $z(x)$ with Eq.~(\ref{equ:ncfm}) \;
  Compute optimal transport loss with Eq.~(\ref{equ:otloss}) \;
  Compute learnable curvature with Eq.~(\ref{equ:curv}) \;
  Compute learnable weight with Eq.~(\ref{equ:weight}) \;
  
  Overall loss $\mathcal{L}_{\text{total}} \gets \mathcal{L}_{\text{DM}} + \lambda_{\text{OT}}\mathcal{L}_{\text{OT}} + \lambda_c \mathcal{L}_{\text{curv}}$\;
  Update $\mathcal{D}_s \gets \mathcal{D}_s - \eta \nabla_{\mathcal{D}_s}\mathcal{L}_{\text{total}}$\;
}
\end{algorithm}

\section{Synthetic images}
Condensed images in Figure~\ref{fig:images}.

\begin{figure}
    \centering
    \includegraphics[width=1.0\linewidth]{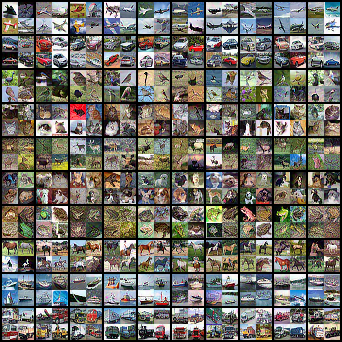}
    \caption{Condensed images}
    \label{fig:images}
\end{figure}

\section{Hyperparameter setting}
\label{sec:hyper}

We adopt a Convnet-3 backbone with batch normalization and width multiplier $1.0$. 
Synthetic training uses a batch size of $128$. 
Training runs for $1500$ evaluation epochs with evaluation every $100$ epochs, preceded by $60$ pretraining epochs. 
Optimization is performed using AdamW with base learning rate $0.001$, momentum $0.9$, and weight decay $5\times 10^{-4}$. 
For synthetic images, we use learning rate $0.01$ and momentum $0.5$.
The overall condensation process runs for $10{,}000$ iterations with frequency parameter $4096$ and factor $2$. 
Data augmentation includes mixup (Beta = $1.0$, probability $0.5$).  For curvature regularization, we fix $\lambda_H = \lambda_S = 1$. 
The learnable geometry weights for Euclidean, hyperbolic, and spherical components are initialized uniformly at $1/3$ each. 
All experiments are run with $3$ independent model initializations and random seed $0$. 

\section{Additional Experimental Results}
\label{sec:exp_more}

We provide additional empirical results to complement the
main paper.

\subsection{Complexity Analysis}
\label{app:complexity}

We first decompose the contribution of each geometric component to per-iteration
training cost and accuracy on CIFAR-10 with IPC=10. Starting from a product
space backbone (E$\times$H$\times$S), we incrementally add (i) learnable
curvature and geometry weights, and (ii) the geometry-aware OT module.

As shown in Table~\ref{tab:complexity-component}, the product space alone already
provides a strong improvement over the Euclidean baseline at moderate cost.
Adding curvature and weights yields a small, consistent performance gain with
negligible overhead ($\approx 1.01\times$ in both speed and memory). The
geometry-aware OT component further improves accuracy, but it is the main
source of additional cost (about $1.40\times$ slower and $1.20\times$ higher
peak memory than the product space alone). This decomposition highlights that
our core idea---explicit product-manifold modeling---remains efficient, while OT
acts as an optional accuracy booster.

\begin{table}[h]
    \centering
    \caption{Component-wise complexity and accuracy on CIFAR-10, IPC=10.
    Speed and memory are normalized to the Product Space row (multipliers in
    parentheses).}
    \label{tab:complexity-component}
    \begin{tabular}{lccc}
        \toprule
        Component & Speed (s/iter) & Peak GPU Mem (GB) & Acc. (\%) \\
        \midrule
        Product Space (E$\times$H$\times$S) & 0.0367 (1.00$\times$) & 2.728 (1.00$\times$) & 73.5 \\
        + Curvature \& Weights               & 0.0372 (1.01$\times$) & 2.759 (1.01$\times$) & 73.9 \\
        + Geometry-aware OT                  & 0.0513 (1.40$\times$) & 3.286 (1.20$\times$) & 74.4 \\
        \bottomrule
    \end{tabular}
\end{table}

To fairly compare our method with the Euclidean NCFM baseline, we further match
methods at the same final accuracy and report the total training cost in
Table~\ref{tab:complexity-matched}. Under an equal-performance target on
CIFAR-10 with IPC=10, NCFM requires 4000 epochs, while our product space with
curvature and weights needs only 1000 epochs. Despite a slightly higher
per-iteration cost, the reduced number of epochs results in lower overall
runtime (37.2\,s vs.\ 44.0\,s) and fewer total FLOPs (201.7 vs.\ 270.0 GFLOPs).
The full model with OT achieves similar accuracy but incurs higher cost,
reflecting the expected trade-off when OT is enabled.

\begin{table}[h]
    \centering
    \caption{Total runtime and FLOPs at matched performance on CIFAR-10,
    IPC=10. ``Product Space+Curv\&Weights'' denotes our method without OT.
    ``Full'' denotes the complete model including OT.}
    \label{tab:complexity-matched}
    \begin{tabular}{lccc}
        \toprule
        Method (matched performance) & Epochs & Total Runtime (s) & Total FLOPs (GFLOPs) \\
        \midrule
        NCFM (Euclidean baseline)         & 4000 & 44.0 & 270.0 \\
        Product Space+Curv\&Weights       & 1000 & 37.2 & 201.7 \\
        Full (with OT)                    &  950 & 48.7 & 282.4 \\
        \bottomrule
    \end{tabular}
\end{table}

Finally, we compare different OT variants to clarify the cost--benefit trade-off
of geometry-aware matching (Table~\ref{tab:complexity-ot-variants}). Replacing
our original OT with classic entropic Sinkhorn or Greenkhorn solvers reduces
per-iteration cost while preserving most of the accuracy gain. For example,
Sinkhorn OT cuts the overhead to about $1.20\times$ in speed and $1.08\times$ in
memory relative to the product space, yet still improves accuracy by
approximately $0.8$\,pp over the product space alone. This confirms that OT is
an optional, flexible module whose complexity can be tuned to the available
budget, while the product manifold itself delivers the core geometric benefit.

\begin{table}[h]
    \centering
    \caption{Complexity and accuracy of different OT variants on CIFAR-10,
    IPC=10. Multipliers in parentheses are relative to the Product Space row.}
    \label{tab:complexity-ot-variants}
    \begin{tabular}{lccc}
        \toprule
        Component & Training Speed (s/iter) & Peak GPU Mem (GB) & Acc. (\%) \\
        \midrule
        Product Space (E$\times$H$\times$S) & 0.0410 (1.00$\times$) & 2.6709 (1.00$\times$) & 73.5 \\
        + Curvature \& Weights             & 0.0422 (1.03$\times$) & 2.6798 (1.00$\times$) & 73.9 \\
        + OT (Original)                    & 0.0569 (1.39$\times$) & 3.0559 (1.14$\times$) & 74.4 \\
        + Sinkhorn OT ($\varepsilon=0.1$)  & 0.0492 (1.20$\times$) & 2.8871 (1.08$\times$) & 74.3 \\
        + Greenkhorn OT ($\varepsilon=0.1$)& 0.0820 (2.00$\times$) & 2.8940 (1.08$\times$) & 74.2 \\
        \bottomrule
    \end{tabular}
\end{table}

\subsection{Dimensionality Analysis}
\label{app:dimensionality}

We next study how the total embedding dimension and its allocation across
geometries affect performance. Table~\ref{tab:dim-grid} reports a grid around
our default total dimension on CIFAR-10 with IPC=10, comparing our product
space to Euclidean NCFM. Lowering the total dimension underfits the manifold
geometry and hurts accuracy, while excessively increasing it introduces noise
and reduces generalization. The default setting (2048) yields the best accuracy
and consistently outperforms NCFM under the same budget.

\begin{table}[h]
    \centering
    \caption{Dimension grid (low/default/high) around the default total
    embedding dimension on CIFAR-10, IPC=10.}
    \label{tab:dim-grid}
    \begin{tabular}{lcc}
        \toprule
        CIFAR-10, IPC=10 & NCFM (Euclidean) & Ours (Product) \\
        \midrule
        Low dims (1024)        & 71.1 & 72.5 \\
        Default dims (2048)    & 71.8 & 74.4 \\
        High dims (4096)       & 71.3 & 72.8 \\
        \bottomrule
    \end{tabular}
\end{table}

To probe how capacity should be distributed across geometries, we fix the total
dimension at 2048 and redistribute it among Euclidean, hyperbolic, and spherical
factors. As shown in Table~\ref{tab:dim-allocation}, the split does matter:
a slightly Euclidean-dominant allocation (1024/512/512) achieves the best
accuracy (74.2\%), while hyperbolic- or spherical-dominant splits lead to small
but measurable changes. Overall, the near-equal default split is robust and
performs competitively, suggesting that dataset-specific geometry may admit an
even better allocation, but searching over dimension splits would introduce
substantial complexity beyond our current scope.

\begin{table}[h]
    \centering
    \caption{Dimension allocation across geometries (E,H,S) on CIFAR-10,
    IPC=10. Total dimension is fixed at 2048.}
    \label{tab:dim-allocation}
    \begin{tabular}{lc}
        \toprule
        Dimension allocation (E,H,S) & Acc. (\%) \\
        \midrule
        Default (near-equal split)       & 73.9 \\
        E-dominant (1024, 512, 512)      & 74.2 \\
        H-dominant (512, 1024, 512)      & 73.7 \\
        S-dominant (512, 512, 1024)      & 73.5 \\
        \bottomrule
    \end{tabular}
\end{table}

\subsection{High-Resolution Datasets}
\label{app:highres}

To verify that our geometric modeling extends beyond small, low-resolution
benchmarks, we evaluate on the higher-resolution Imagenette and ImageWoof
datasets under the NCFM setup. Table~\ref{tab:imagenette-imagewoof} summarizes
the results. Across all IPC settings, our product-manifold method consistently
improves over the Euclidean NCFM baseline by approximately 1--1.5\,pp, mirroring
the gains observed on CIFAR. This indicates that explicit manifold geometry
remains beneficial and practical on higher-resolution data.

\begin{table}[h]
    \centering
    \caption{Experiments on Imagenette and ImageWoof under the NCFM setup.
    We report test accuracy (\%) for IPC=1 and IPC=10.}
    \label{tab:imagenette-imagewoof}
    \begin{tabular}{lcc}
        \toprule
        Dataset (NCFM setup) & NCFM (Euclidean) & Ours (Product) \\
        \midrule
        Imagenette --- IPC=1  & 42.5 $\pm$ 0.3 & 43.9 $\pm$ 0.3 \\
        Imagenette --- IPC=10 & 63.2 $\pm$ 0.4 & 64.8 $\pm$ 0.3 \\
        ImageWoof --- IPC=1   & 34.7 $\pm$ 0.4 & 35.5 $\pm$ 0.3 \\
        ImageWoof --- IPC=10  & 54.1 $\pm$ 0.3 & 55.1 $\pm$ 0.2 \\
        \bottomrule
    \end{tabular}
\end{table}

\subsection{Deeper Backbone: ResNet-18}
\label{app:resnet18}

We also test GeoDM with a deeper ResNet-18 backbone to examine whether
geometry remains useful when the feature extractor is more expressive. Results
on CIFAR-10 and CIFAR-100 are shown in
Table~\ref{tab:resnet18-cifar}. Using ResNet-18 amplifies the benefits of
geometry: GeoDM maintains a consistent margin over Euclidean baselines across
all IPC budgets, and the absolute gains are larger than those observed with a
shallow ConvNet. This suggests that deeper networks can better exploit the
non-Euclidean cues encoded in the product manifold.

\begin{table}[h]
    \centering
    \caption{Comparison of dataset distillation methods on CIFAR-10 and
    CIFAR-100 using ResNet-18. We report accuracy (\%) for different IPC
    budgets. Ratios for IPC=1/10/50 correspond to 0.02\%, 0.2\%, and 1\% of
    CIFAR-10, and 0.2\%, 2\%, and 10\% of CIFAR-100.}
    \label{tab:resnet18-cifar}
    \begin{tabular}{lcccccc}
        \toprule
        & \multicolumn{3}{c}{CIFAR-10} & \multicolumn{3}{c}{CIFAR-100} \\
        \cmidrule(lr){2-4} \cmidrule(lr){5-7}
        Method & IPC=1 & IPC=10 & IPC=50 & IPC=1 & IPC=10 & IPC=50 \\
        \midrule
        Whole dataset & 84.2 $\pm$ 0.6 & -- & -- & 56.3 $\pm$ 0.4 & -- & -- \\
        DM            & 13.3 $\pm$ 0.2 & 20.2 $\pm$ 0.2 & 26.7 $\pm$ 0.1 &
                         1.8 $\pm$ 0.1 &  4.3 $\pm$ 0.1 & 12.4 $\pm$ 0.6 \\
        M3D           & 42.5 $\pm$ 0.5 & 55.2 $\pm$ 0.3 & 63.3 $\pm$ 0.4 &
                         3.3 $\pm$ 0.2 & 14.9 $\pm$ 0.5 & 17.7 $\pm$ 0.4 \\
        DREAM         & 40.8 $\pm$ 0.3 & 64.0 $\pm$ 0.4 & 71.1 $\pm$ 0.3 &
                        19.2 $\pm$ 0.4 & 37.4 $\pm$ 0.6 & 43.5 $\pm$ 0.4 \\
        NCFM          & 25.6 $\pm$ 0.8 & 47.4 $\pm$ 0.7 & 61.9 $\pm$ 0.6 &
                         9.6 $\pm$ 0.2 & 29.2 $\pm$ 0.6 & 44.8 $\pm$ 0.4 \\
        GeoDM         & 43.6 $\pm$ 0.2 & 68.2 $\pm$ 0.3 & 73.8 $\pm$ 0.2 &
                        22.2 $\pm$ 0.6 & 39.8 $\pm$ 0.4 & 47.6 $\pm$ 0.2 \\
        \bottomrule
    \end{tabular}
\end{table}

\section{Related Work}
\label{sec:related}

\paragraph{Dataset distillation with distribution matching and feature matching}

Dataset distillation (DD) was first introduced by \citep{wang2018dataset} with the aim of synthesizing compact datasets that enable models trained on it to achieve performance comparable to those trained on the original large dataset. Among various approaches, distribution matching (DM) \citep{zhao2023distribution} has been recognized as a simple yet efficient framework that maintains a good balance between accuracy and efficiency. Within the DM family, CAFE \citep{wang2022cafe} proposed a curriculum-based strategy to progressively match features from shallow to deep layers. M3D \citep{zhang2024m3d} minimized discrepancies in higher-order moments to strengthen distribution-level consistency, while DSDM \citep{li2024dsdm} further advanced this by conducting distribution matching directly in the feature space with a scalable optimization strategy. WDDD \citep{liu2023WDDD} employs Wasserstein metrics for feature-level matching, while OPTICAL \citep{cui2025optical} leverages optimal transport to allocate contributions during dataset distillation. More recently, NCFM \citep{wang2025ncfm} proposed a novel condensation framework that captures the overall distributional discrepancy between synthetic and real data.
While \cite{li2025hyperbolicdatasetdistillation} use hyperbolic space to do the distribution matching, it doesn't declare the advantage of hyperbolic space. And single hyperbolic space will not surpass Euclidean space in most of the situation.

\paragraph{Manifold learning with non-Euclidean and product geometries}
A growing body of work emphasizes that the geometry of the representation space should match the structural biases of data \citep{nickel2017poincare, meila2023manifold}. Hyperbolic models \citep{ganea2018hyperbolic} are effective in capturing hierarchical growth, with extensions to embeddings, networks, and graph convolutions, often incorporating trainable curvature to enhance flexibility \citep{wang2021mixed}. Spherical models \citep{davidson2018hyperspherical} handle directional or rotation-equivariant signals by employing hyperspherical latent variables and spherical convolutions. To better represent heterogeneous structures, product manifolds that combine Euclidean, spherical, and hyperbolic components have been proposed \citep{gu2018learning}, offering a principled way to encode mixed curvatures. Recent advances extend this line by introducing mixed-curvature spaces \citep{wang2021mixed} where curvature parameters are learned jointly with embeddings, thereby adapting geometry to data in a more flexible manner. Beyond embeddings, general-purpose manifold-aware architectures have emerged, such as Riemannian residual neural networks \citep{katsman2023riemannian}, which extend ResNets to arbitrary manifolds using exponential maps and vector fields.

\end{document}